\def\eqref#1{equation~\ref{#1}}
\def\Eqref#1{Equation~\ref{#1}}
\def\1{\bm{1}}
\DeclareMathAlphabet{\mathsfit}{\encodingdefault}{\sfdefault}{m}{sl}
\SetMathAlphabet{\mathsfit}{bold}{\encodingdefault}{\sfdefault}{bx}{n}
\newcommand{\E}{\mathbb{E}}
\DeclareMathOperator*{\argmax}{arg\,max}
\DeclareMathOperator*{\argmin}{arg\,min}
\newcommand{\smallsum}[2]{ {\textstyle 
	\sum\limits_{\scriptscriptstyle #1}^{\scriptscriptstyle #2}} 
}
\newcommand{\smallfrac}[2]{ {\textstyle \frac{#1}{#2}} }
\newcommand{\Set}[1]{\mathcal{#1}}
\newcommand{\shorttitle}{UneVEn\xspace}
\newcommand\bs[1]{\boldsymbol{#1}}
\theoremstyle{definition}
\newtheorem*{propositionNO}{Proposition}
\definecolor{lightgray}{rgb}{0.75, 0.75, 0.75}
\definecolor{darkgreen}{rgb}{0, 0.75, 0}
\definecolor{wendelincolor}{rgb}{0.66, 0.33, 0}
\newcommand{\wbr}[2]{{\color{wendelincolor}#1}}
\definecolor{wbcolor}{rgb}{0.5, 1, 0.5}
\definecolor{cscolor}{rgb}{0.9, 0.0, 0.2}
\icmltitlerunning{UneVEn: Universal Value Exploration for  Multi-Agent Reinforcement Learning}
\begin{document}

\setlength{\abovedisplayskip}{4pt}
\setlength{\belowdisplayskip}{4pt}

\twocolumn[
\icmltitle{UneVEn: Universal Value Exploration for\\ Multi-Agent Reinforcement Learning}



\icmlsetsymbol{equal}{*}

\begin{icmlauthorlist}
\icmlauthor{Tarun Gupta}{toox}
\icmlauthor{Anuj Mahajan}{toox}
\icmlauthor{Bei Peng}{toox}
\icmlauthor{Wendelin B{\"o}hmer}{tode}
\icmlauthor{Shimon Whiteson}{toox}
\end{icmlauthorlist}

\icmlaffiliation{toox}{Department of Computer Science, University of Oxford, Oxford, United Kingdom}
\icmlaffiliation{tode}{Department of Software Technology, Delft University of Technology, Delft, Netherlands}

\icmlcorrespondingauthor{Tarun Gupta}{tarun.gupta@cs.ox.ac.uk}

\icmlkeywords{multi-agent reinforcement learning, deep Q-learning, universal value functions, successor features, relative overgeneralization}

\vskip 0.3in
]



\printAffiliationsAndNotice{}  

\begin{abstract}
VDN and QMIX are two popular value-based algorithms for cooperative MARL that learn a centralized action value function as a monotonic mixing of per-agent utilities. While this enables easy decentralization of the learned policy, the restricted joint action value function can prevent them from solving tasks that require significant coordination between agents at a given timestep. We show that this problem can be overcome by improving the {\em joint exploration} of all agents during training.  Specifically, we propose a novel MARL approach called Universal Value Exploration (\shorttitle) that learns a set of {\em related} tasks simultaneously with a linear decomposition of universal successor features. With the policies of already solved related tasks, the joint exploration process of all agents can be improved to help them achieve better coordination. Empirical results on a set of exploration games, challenging cooperative predator-prey tasks requiring significant coordination among agents, and StarCraft II micromanagement benchmarks show that \shorttitle can solve tasks where other state-of-the-art MARL methods fail. 
\end{abstract}

\section{Introduction}
Learning control policies for cooperative multi-agent reinforcement learning (MARL) remains challenging as agents must search the joint action space, which grows exponentially with the number of agents. Current state-of-the-art value-based methods such as VDN \citep{sunehag2017value} and QMIX \citep{rashid2020monotonic} learn a \textit{centralized} joint action value function as a \textit{monotonic} factorization of \textit{decentralized} agent utility functions. 
Due to this monotonic factorization, the joint action value function can be decentrally maximized as each agent can simply select the action that maximizes its corresponding utility function, known as the Individual Global Maximum principle \citep[IGM,][]{son2019qtran}. 


This monotonic restriction cannot represent the value of all joint actions
and an agent's utility depends on the policies 
of the other agents \citep[nonmonotonicity,][]{mahajan2019maven}. 
Even in collaborative tasks this can exhibit 
{\em relative overgeneralization} 
\citep[RO,][]{panait2006biasing},
when the optimal action's utility falls below that 
of a suboptimal action \citep{wei2018multiagent,wei2019multiagent}.
While this pathology depends in practice 
on the agents' random experiences,
we show in Section \ref{sec:example}
that in expectation RO prevents VDN from learning a large 
set of predator-prey games 
during the critical phase of uniform exploration. 


QTRAN \citep{son2019qtran} and WQMIX \citep{rashid2020weighted} show 
that this problem can be avoided by weighting 
the {\em joint actions of the optimal policy} higher.
They propose to deduce  this weighting from an unrestricted joint value function
that is learned simultaneously.
However, this unrestricted value 
is only a critic of the factored model, 
which itself is prone to RO,
and often fails in practice due to insufficient $\epsilon$-greedy exploration. 
MAVEN \citep{mahajan2019maven} improves exploration by
learning an ensemble of monotonic joint action value functions 
through committed exploration and maximizing 
diversity in the joint team behavior.
However, it does not specifically target optimal actions
and may not work in tasks with strong RO. 

The core idea of this paper is 
that even when a {\em target task} 
exhibits RO under value factorization,
there may be {\em similar tasks} that do not.
If their optimal actions overlap in some states with the target task,
executing these simpler tasks 
can implicitly weight exploration to overcome RO.
We call this novel paradigm Universal Value Exploration (\shorttitle).
To learn different MARL tasks simultaneously, 
we extend Universal Successor Features \citep[USFs,][]{borsa2018universal}
to Multi-Agent USFs (MAUSFs), using a VDN decomposition. 
During execution, \shorttitle samples task descriptions 
from a Gaussian distribution centered around the target
and executes the task with the highest value.
This biases exploration towards the optimal actions 
of tasks that are similar but have already been solved. Sampling these reduces RO for every task that shares the same optimal action and thus increases the number of solvable tasks, eventually overcoming RO for the target task, too. 
This is the different from exploration methods like MAVEN, which keep the task same but reweigh explorative behaviours using the task returns. We show in Section \ref{sec:example} on a classic RO example that \shorttitle can gradually increase the set of {\em solvable tasks} until it includes the target task. 

We evaluate our novel approach against several ablations in predator-prey tasks 
that require significant coordination amongst agents 
and highlight the RO pathology,
as well as other commonly used benchmarks
like StarCraft II \citep{samvelyan2019starcraft}. 
We also show empirically that \shorttitle significantly 
outperforms current state-of-the-art value-based methods 
on target tasks that exhibit strong RO and in zero-shot generalization 
\citep{borsa2018universal} to other reward functions.
\section{Illustrative Example}
\label{sec:example}


\begin{figure}[t!]
	\centering
	\subfigure{\includegraphics[width=.75\columnwidth]{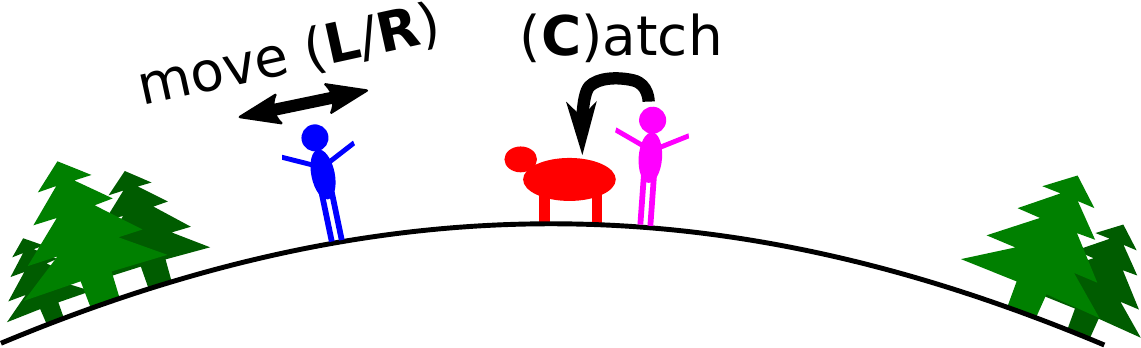}}
	\hfill
	\subfigure{\includegraphics[width=.2\columnwidth]{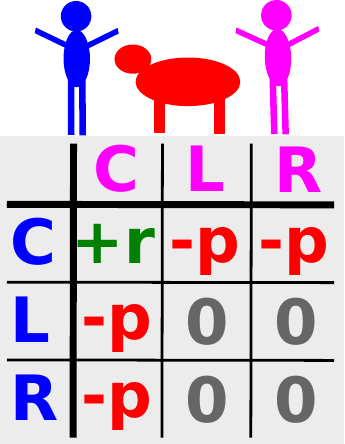}}
	\caption{Simplified predator-prey environment (left),
			where two agents are only rewarded 
			when they both stand next to prey (right).}
	\label{fig:spp_env}
\end{figure}

Figure \ref{fig:spp_env} sketches a simplified predator-prey task where two agents (blue and magenta) can both move left or right ($L/R$) and execute a special `catch' ($C$) action when they both stand next to the stationary prey (red). The agents are collaboratively rewarded (shown on the right of Figure \ref{fig:spp_env}) $+r$ when they catch the prey together 
and punished $-p$ if they attempt it alone, both ending the episode. 
For large $p$, both VDN and QMIX can lead to 
relative overgeneralization \citep[RO,][]{panait2006biasing} 
in the rewarded state $s$ when agent $1$'s utility $Q^1(s, u^1)$
of the catch action $u^1=C$ drops below that of the movement actions $u^1 \in \{L,R\}$. At the beginning of training, 
when the value estimates are near zero 
and both agents explore random actions, we have:
\begin{equation*}
	Q^1(s, C) < Q^1(s, L/R)
	\;\;\Rightarrow\;\; 
	r < p \, \big(\smallfrac{1}{\pi^2(C|s)} - 2\big) + c \,,
\end{equation*}
where $c$ is a constant that depends 
mainly on future values.
See Appendix \ref{sec:app-example} for a formal derivation.
The threshold at which  $p$
yields RO depends strongly on agent $2$'s probability of choosing 
$u^2=C$, i.e.,~$\pi^2(C|s)$. 
For uniform exploration,
this criterion is fulfilled if $p > r$,
but reinforcing other actions than $u^2=C$
can lower this threshold significantly.
However, if agent 2 chooses $u^2=C$ for more than half of its actions,  
no amount of punishment can prevent learning of the correct greedy policy.

Figure \ref{fig:spp_tasks} plots the entire task space w.r.t.~$p$ and $r$,
marking the set of tasks solvable under uniform exploration (green area) 
and tasks exhibiting RO on average for varying $\pi^2(C|s)$.
MAUSFs uses a VDN decomposition of successor features 
to learn all tasks in the black circle simultaneously,
which initially can only solve 
monotonic tasks in the green area.
UneVEn changes this by evaluating random tasks (blue dots), 
and exploring those already solved (magenta dots). 
This increases the fraction of observed $u^2=C$,
and thus the set of solvable tasks,
which eventually reaches the {\em target task} (cross).

\begin{figure}[t!]
	\centering
	\includegraphics[width=.65\columnwidth]{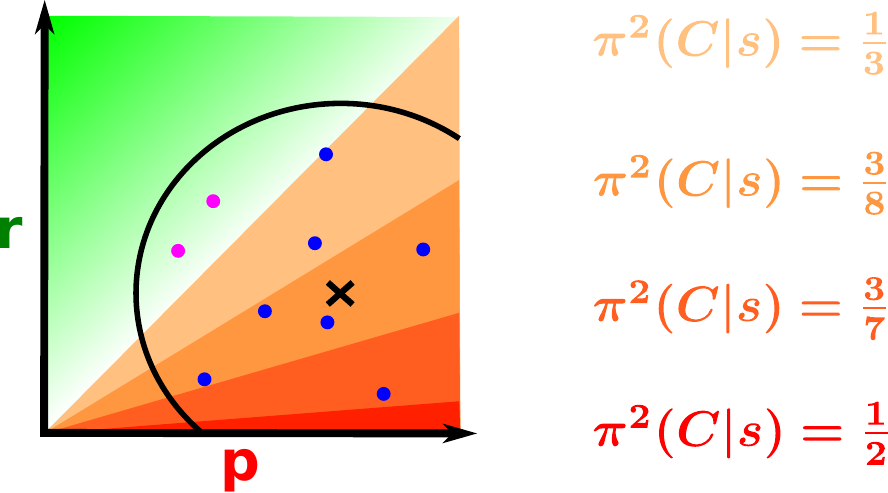}
	\caption{Task space of Figure \ref{fig:spp_env}.
				Tasks solvable under uniform exploration 
				($\pi^2(C|s)=\frac{1}{3}$) are green, 
				shades of red represent tasks that on average
				exhibit RO for different $\pi^2(C|s)$.}
	\label{fig:spp_tasks}
\end{figure}

\section{Background}


\wbr{}{\textbf{Dec-POMDP}:}
A fully cooperative multi-agent task can be formalized as a \textit{decentralized partially observable Markov decision process} \citep[Dec-POMDP,][]{oliehoek2016concise} consisting of a tuple $G=\left\langle \Set S,\Set U, P, R, \Omega, O, n, \gamma \right\rangle$. $s \in \Set S$ describes the true state of the environment. At each time step, each agent $a \in \Set A \equiv \{1,...,n\}$ chooses an action $u^a \in \Set U$, forming a joint action $\bs{u}\in\bs{\Set U}\equiv \Set U^n$. 
This causes a transition in the environment according to the state transition kernel $P(s'|s,\bs{u}):\Set S\times\bs{\Set U}\times \Set S \rightarrow [0,1]$. 
All agents are collaborative and share therefore the same reward function $R(s,\bs{u}): \Set S\times\bs{\Set U}\rightarrow\mathbb{R}$. $\gamma\in[0,1)$ is a discount factor.

Due to \textit{partial observability}, each agent $a$ cannot observe the true state $s$, but receives an observation $o^a \in \Omega$ drawn from observation kernel $o^a \sim O(s,a)$. At time $t$, each agent $a$ has access to its action-observation history $\tau_t^a \in \Set T_t \equiv (\Omega \times \Set U)^t \times \Omega$, on which it conditions a stochastic policy $\pi^a(u_t^a|\tau_t^a)$. $\bs{\tau}_t \in \Set T_t^n$ denotes the histories of all agents. 
The joint stochastic policy $\bs{\pi}(\bs u_t|s_t,\bs\tau_t) \equiv \prod_{a=1}^n \pi^a(u_t^a|\tau_t^a)$ induces a joint action value function : $Q^\pi(s_t, \bs{\tau}_t, \bs{u}_t)=\mathbb{E} \left[G_t|s_t,\bs\tau_t,\bs{u}_t\right]$, where $G_t=\sum^{\infty}_{i=0}\gamma^ir_{t+i}$ is the \textit{discounted return}. 

\textbf{CTDE}: We adopt the framework of \textit{centralized training and decentralized execution} \citep[CTDE,][]{kraemer2016},
which assumes access to all action-observation histories $\bs{\tau}_t$ and global state $s_t$ during training, but each agent's decentralized policy $\pi^a$ can only condition on its own action-observation history $\tau^a$.
This approach can exploit information that is not available during execution and also freely share parameters and gradients, which improves the sample efficiency \citep[see e.g.,][]{foerster2018counterfactual, rashid2020monotonic, bohmer2019deep}.

\textbf{Value Decomposition Networks}: 
A naive way to learn in MARL is \textit{independent Q-learning} \citep[IQL,][]{tan1993multi}, which learns an independent action-value function $Q^a(\tau_t^a, u^a_t; \theta^a)$ for each agent $a$ that conditions only on its local action-observation history $\tau_t^a$. To make better use of other agents' information in CTDE, \textit{value decomposition networks} \citep[VDN,][]{sunehag2017value} represent the joint action value function $Q_{tot}$ as a sum of per-agent {\em utility functions} $Q^a$: $Q_{tot}(\bs{\tau}, \bs{u}; \theta) \equiv 
	\sum_{a=1}^n Q^a(\tau^a, u^a; \theta)$. 
Each $Q^a$ still conditions only on individual action-observation histories and can be represented by an agent network that shares parameters across all agents.
The joint action-value function $Q_{tot}$ can be trained 
using Deep Q-Networks \citep[DQN,][]{mnih2015human}.  Unlike VDN, QMIX \citep{rashid2020monotonic} represents the joint action-value function $Q_{tot}$ with a nonlinear \textit{monotonic} combination of individual utility functions. The greedy joint action in both VDN and QMIX  
can be computed in a decentralized fashion by individually maximizing each agent's utility. See \citet{oroojlooyjadid2019review} for a more in-depth overview of cooperative deep MARL. 

\textbf{Task based Universal Value Functions}: In this paper, we consider tasks that differ only in their reward functions $R_{\bs {w}}(s, \bs u) \equiv \bs w^\top \bs\phi(s, \bs u)$, which are linear combinations of a set of basis functions $\bs \phi : \Set S \times \bs{\Set U} \to \mathbb R^d$. Intuitively, the basis functions $\bs \phi$ encode potentially rewarded events, such as opening a door or picking up an object.  We use the  weight vector $\bs{w}$ to denote the task with reward function $R_{\bs{w}}$. Universal Value Functions \citep[UVFs,][]{schaul2015universal} extend DQN to learn a \textit{generalizable} value function conditioned on tasks. UVFs are typically of the form $Q^\pi(s_t, \bs{u}_t, \bs{w})$ to indicate the action-value function of task $\bs{w}$ under policy $\pi$ at time $t$ as:
\def\eqspace{\;\;}
\begin{align} \label{uvf-q}
	Q^\pi(s_t, \bs u_t, \bs{w}) &= \mathbb{E}^\pi \big[ 
		\,{\textstyle\sum\limits_{i=0}^\infty} 
			\gamma^i \, R_{\bs w}(s_{t+i}, \bs u_{t+i}) \,\big|\, s_t, \bs u_t \big] \nonumber \\&\!\!\!\!= \mathbb{E}^\pi \big[ 
		\,{\textstyle\sum\limits_{i=0}^\infty} 
			\gamma^i \, \bs{\phi}(s_{t+i}, \bs u_{t+i})^\top \bs{w} \,\big|\, s_t, \bs u_t \big].
\end{align}

\textbf{Successor Features}: The Successor Representation \citep{dayan1993improving} has been widely used in single-agent settings \citep{barreto2017successor, barreto2018transfer, borsa2018universal} to generalize across tasks with given reward specifications.
By simply rewriting the definition of the action value function $Q^\pi(s_t, \bs{u}_t, \bs w)$ of task $\bs w$ from \Eqref{uvf-q} we have:
\def\eqspace{\;\;}
\begin{align} \label{single-sf}
	Q^\pi(s_t, \bs u_t, \bs{w}) 
	&= \mathbb{E}^{\pi} 
		\big[ \,{\textstyle\sum\limits_{i=0}^\infty} 
			\gamma^i \, \bs{\phi}(s_{t+i}, \bs u_{t+i}) 
		\, \big| \, s_t, \bs u_t\big]^\top \bs{w} \nonumber \\
	&\equiv \bs{\psi}^\pi\!(s_t, \bs u_t)^\top \bs{w} \,,
\end{align}
where $\bs{\psi}^\pi\!(s, \bs u)$ are the Successor Features (SFs) under policy $\pi$. For the optimal policy $\pi^{\star}_{\bs{z}}$ of task $\bs{z}$, the SFs $\bs{\psi}^{\pi^{\star}_{\bs{z}}}$ summarize the dynamics 
under this policy, which can then be weighted with any reward vector $\bs{w} \in \mathbb{R}^d$ to instantly evaluate policy $\pi^{\star}_{\bs{z}}$ on it:
$Q^{\pi^{\star}_{\bs{z}}}(s, \bs u, \bs{w}) = 
	\bs{\psi}^{\pi^{\star}_{\bs{z}}}(s, \bs u)^\top \bs{w}.$

\textbf{Universal Successor Features and Generalized Policy Improvement}: \citet{borsa2018universal} introduce universal successor features (USFs) that learn SFs conditioned on tasks using the \textit{generalization} power of UVFs. Specifically, they define UVFs of the form $Q(s, \bs{u}, \bs{z}, \bs{w})$ which represents the value function of policy $\pi_{\bs{z}}$ evaluated on task $\bs{w} \in \mathbb{R}^d$. These UVFs can be factored using the SFs property (\Eqref{single-sf}) as: $Q(s, \bs{u}, \bs{z}, \bs{w}) = \bs{\psi}(s, \bs{u}, \bs{z})^\top \bs{w}$, where $\bs{\psi}(s, \bs{u}, \bs{z})$ are the USFs that generate the SFs induced by task-specific policy $\pi_{\bs{z}}$. One major advantage of using SFs is the ability to \textit{efficiently} do generalized policy improvement \citep[GPI,][]{barreto2017successor}, which allows a new policy to be computed for \textit{any unseen} task based on instant policy evaluation of a \textit{set} of policies on that unseen task with a simple dot-product. Formally, given a set $\mathcal{C} \subseteq \mathbb{R}^d$ of tasks and their corresponding SFs $\{\bs{\psi}(s, \bs{u}, \bs{z})\}_{\bs{z} \in \mathcal{C}}$ induced by corresponding policies $\{\pi_{\bs{z}}\}_{\bs{z} \in \mathcal{C}}$, a new policy $\pi'_{\bs w}$ for any unseen task $\bs{w} \in \mathbb{R}^d$ can be derived using:
\begin{align} \label{usf-gpi}
	\pi'_{\bs w}(s) &\in \argmax_{\bs{u} \in \bs{\Set U}} 
	\max_{\bs{z} \in \mathcal{C}} Q(s, \bs{u}, \bs{z}, \bs{w}) \nonumber \\
	&\in
	\argmax_{\bs{u} \in \bs{\Set U}} \max_{\bs{z} \in \mathcal{C}} 
	\bs{\psi}(s, \bs{u}, \bs{z})^\top \bs{w}.
\end{align} 
Setting $\mathcal{C} = \{ \bs{w}\}$ allows us to revert back to UVFs, as we evaluate SFs induced by policy $\pi_{\bs{w}}$ on task $\bs{w}$ itself. However, we can use any set of tasks that are similar to $\bs{w}$ based on some similarity distribution $\mathcal{D}(\cdot | \bs{w})$. The computed policy $\pi'_{\bs w}$ is guaranteed to perform no worse on task $\bs{w}$ than \textit{each} of the policies $\{\pi_{\bs{z}}\}_{\bs{z} \in \mathcal{C}}$ \citep{barreto2017successor}, but often performs much better. SFs thus enable efficient use of GPI, which allows \textit{reuse} of learned knowledge for zero-shot generalization.

\section{Multi-Agent Universal Successor Features}

\begin{figure*}[t!]
	\includegraphics[width=0.97\hsize]{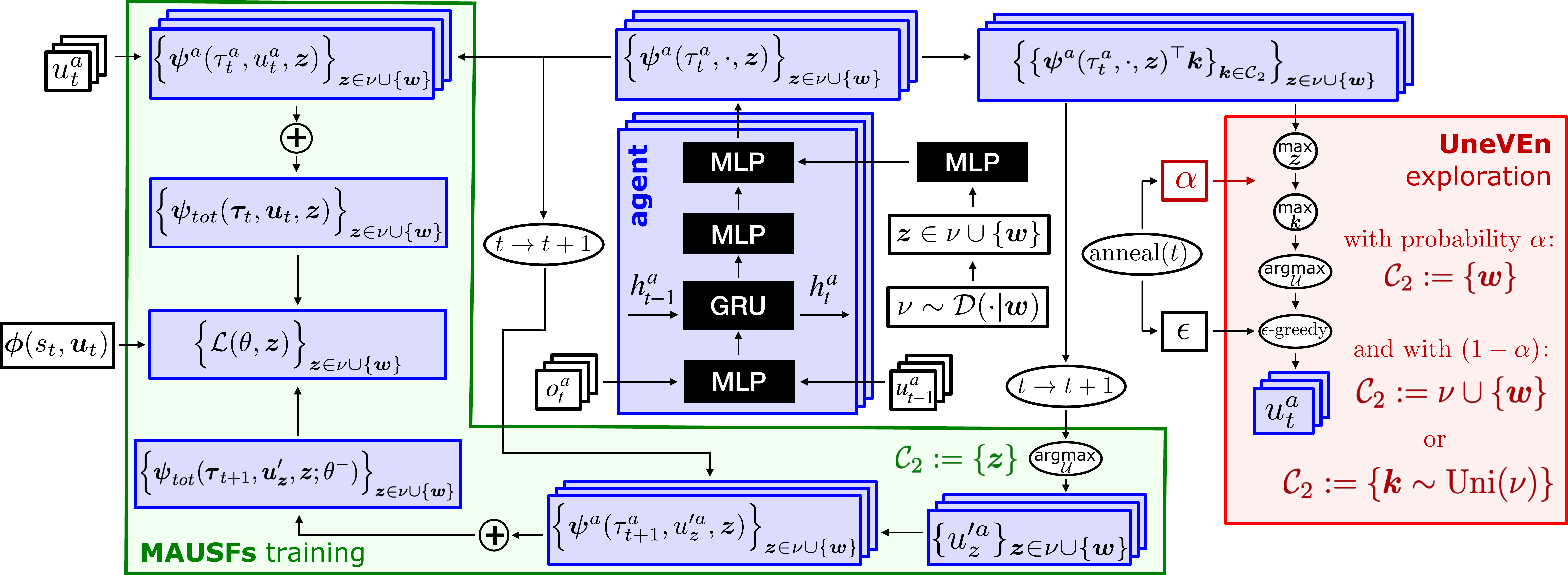}
	\caption{Schematic illustration of the MAUSFs training and \shorttitle exploration with GPI policy.}
	\label{fig:arch}
\end{figure*}

In this section, we introduce Multi-Agent Universal Successor Features (MAUSFs), extending single-agent USFs \citep{borsa2018universal} to multi-agent settings and show how we can learn generalized \textit{decentralized} greedy policies for agents. The USFs based centralized joint action value function $Q_{tot}(\bs{\tau}, \bs{u}, \bs{z}, \bs{w})$ allows evaluation of joint policy $\bs{\pi}_{\bs{z}}  = \left\langle\pi^1_{\bs z}, \ldots, \pi^n_{\bs z}\right\rangle$ comprised of local agent policies $\pi^a_{\bs z}$ of the \textit{same} task $\bs{z}$ on task $\bs{w}$. However, each agent $a$ may execute a different policy $\pi^a_{\bs z^a}$ of different task $\bs z^a \in \mathcal C$, resulting in a combinatorial set of joint policies. 
Maximizing over all combinations  $\bar{\bs z} \equiv \langle \bs z^1, \ldots, \bs z^n \rangle \in \mathcal C^n$ should therefore enormously improve GPI. 
To enable this flexibility, we define the joint action-value function ($Q_{tot}$) of joint policy $\bs{\pi}_{\bar{\bs{z}}} = \{\pi^a_{\bs z^a}\}_{\bs z^a \in \mathcal C}$ evaluated on any task $\bs{w} \in \mathbb{R}^d$ as: 
$Q_{tot}(\bs{\tau}, \bs{u}, \bar{\bs{z}}, \bs{w}) = \bs{\psi}_{tot}(\bs{\tau}, \bs{u}, \bar{\bs{z}})^\top \bs{w}$, where $\bs{\psi}_{tot}(\bs{\tau}, \bs{u}, \bar{\bs{z}})$ are the MAUSFs of $(\bs{\tau}, \bs{u})$ summarizing the joint dynamics of the environment under joint policy $\bs{\pi}_{\bar{\bs{z}}}$. However, training centralized MAUSFs and using centralized GPI to maximize over a combinatorial space of $\bar{\bs{z}}$ becomes impractical when there are more than a handful of agents, since the joint action space ($\bs{\mathcal U}$) and joint task space ($\mathcal{C}^n$) of the agents grows exponentially with the number of agents. To leverage CTDE and enable decentralized execution by agents, we therefore propose novel \textit{agent-specific SFs} for each agent $a$ following local policy $\pi_{\bs{z}^a}^a$, which condition only on its own local action-observation history and task $\bs{z}^a$. 

\textbf{Decentralized Execution}: 
We define local \textit{utility} functions for each agent $a$ as $Q^a(\tau^a, u^a, \bs{z}^a, \bs{w}) = \bs{\psi}^a(\tau^a, u^a, \bs{z}^a; \theta)^\top \bs{w}$, where $\bs{\psi}^a(\tau^a, u^a, \bs{z}^a; \theta)$ are the local agent-specific SFs induced by local policy $\pi^a_{\bs{z}^a}(u^a | \tau^a)$ of agent $a$ sharing parameters $\theta$. Intuitively, $Q^a(\tau^a, u^a, \bs{z}^a, \bs{w})$ is the utility function for agent $a$ when local policy $\pi^a_{\bs{z}^a}(u^a | \tau^a)$ of task $\bs{z}^a$ is executed on task $\bs{w}$. We use VDN decomposition to represent MAUSFs $\bs{\psi}_{tot}$ as a sum of local agent-specific SFs for each agent $a$:
\begin{align} \label{mausf-def}
	Q_{tot}(\bs{\tau}, \bs{u}, \bar{\bs{z}}, \bs{w}) 
	&= {\textstyle\sum\limits_{a=1}^{n}} Q^a(\tau^a\!, u^a\!, \bs{z}^a\!, \bs{w}) \nonumber \\
	&= 
		\underbrace{{\textstyle\sum\limits_{a=1}^{n}} 
			\bs{\psi}^a(\tau^a\!, u^a\!, \bs{z}^a; \theta)^\top}_{\bs{\psi}_{tot}(\bs \tau, \bs u, \bar{\bs{z}}; \theta)}
		\bs{w} 
\end{align}
%
 We can now learn local agent-specific SFs $\bs{\psi}^a$ for each agent $a$ that can be instantly weighted with any task vector $\bs{w} \in \mathbb{R}^d$ to generate local utility functions $Q^a$, thereby allowing agents to use the GPI policy in a decentralized manner.

\textbf{Decentralized Local GPI}: Our novel agent-specific SFs allow each agent $a$ to locally perform decentralized GPI by instant policy evaluation of a set $\mathcal C$ of local task policies $\{\pi^a_{\bs z^a}\}_{\bs z^a \in \mathcal C}$ on any unseen task $\bs w$ to compute a local GPI policy. Due to the linearity of the VDN decomposition, this is equivalent to maximization over all combinations of $\bar{\bs z} \equiv \langle\bs z^1, \ldots, \bs z^n\rangle \in \mathcal C \times \ldots \times \mathcal C \equiv \mathcal C^n$ as:
%
\def\eqgap{\;\;}
\begin{align}
	\bs{\pi}'_{\bs w}(\bs \tau) &\in
	 \argmax_{\bs u \in \bs{\mathcal U}} \max_{\bar{\bs z} \in \mathcal C^n} 
	Q_{tot}(\bs\tau, \bs u, \bar{\bs z}, \bs w) \nonumber \\
	&\!\in\! 
	\big\{\! \argmax_{u^a \in \mathcal U} \max_{\bs z^a \in \mathcal C} 
	\bs{\psi}^a(\tau^a, u^a, \bs{z}^a; \theta)^\top \bs w \big\}_{a=1}^n \,.
\end{align}
As all of the above relies on the linearity of the VDN decomposition, it cannot be directly applied to nonlinear mixing techniques like QMIX \citep{rashid2020monotonic}.  

\textbf{Training}: MAUSFs for task combination $\bar{\bs{z}}$ 
are trained end-to-end by gradient descent on the loss: 
\def\eqgap{\;\;}
\begin{align} \label{sf_loss}
	\mathcal{L}(\theta, \bar{\bs z}) = 
	\mathbb E_{\sim \mathcal{B}} \Big[ \big\|
	\bs{\phi}(s_{t}, \bs{u}_{t}) &+ \gamma 
		\bs{\psi}_{tot}(\bs\tau_{t+1}, \bs{u'}_{\bar{\bs{z}}}, \bar{\bs z}; \theta^{-}) \nonumber \\
	&- \bs{\psi}_{tot}(\bs\tau_t, \bs u_t, \bar{\bs z}; \theta) \big\|^2_2\Big] ,
\end{align}
where the expectation is over a minibatch of samples $\{(s_t, \bs{u}_{t}, \bs{\tau}_{t})\}$ from the replay buffer $\mathcal{B}$ \citep{lin1992},
$\theta^-$ denotes the parameters 
of a target network \citep{mnih2015human} 
and joint actions $\bs{u'}_{\bar{\bs{z}}}= \{u'^a_{\bs{z}^a}\}_{a=1}^n$ 
are selected individually by each agent network 
using the current parameters $\theta$ 
\citep[called Double $Q$-learning,][]{hasselt2016}:
$u'^a_{\bs{z}^a} = \argmax_{u \in \mathcal{U}} 
\bs{\psi}^a(\tau_{t+1}^a, u, \bs{z}^a; \theta)^\top \bs{z}^a$.
Each agent learns therefore local agent-specific 
SFs $\bs{\psi}^a(\tau^a, u^a, \bs{z}; \theta)$
by gradient descent on $\mathcal L(\theta, \bar{\bs z})$ 
for all $\bs z \in \Set C \equiv \nu \cup \{\bm w\}$,
where $\nu \sim \Set D(\cdot|\bs w)$ is drawn 
from a distance measure around target task $\bm w$. 
The green region of Figure \ref{fig:arch} shows a CTDE based architecture to train MAUSFs for a given target task $\bs{w}$. A detailed algorithm is presented in Appendix \ref{app-a}.

\section{\shorttitle} \label{schemes}
In this section, we present \shorttitle (red region of Figure \ref{fig:arch}),  which leverages MAUSFs and decentralized GPI to help overcome relative overgeneralization on the target task $\bs{w}$. At the beginning of every exploration episode, we sample a set of related tasks $\nu = \{ \bs{z} \sim \mathcal{D}(\cdot | \bs{w})\}$, containing potentially simpler reward functions, from a distribution $ \mathcal{D}$ around the target task. The basic idea is that {\em some} of these related tasks can be efficiently learned with a factored value function. 
These tasks are therefore solved early and exploration 
concentrates on the state-action pairs that are useful to them. If other tasks close to those already solved have the same optimal actions,
this implicit weighting allows to solve them too
\citep[shown by][]{rashid2020weighted}.
Furthermore, tasks closer to $\bs w$ are sampled more frequently,
which biases the process to eventually overcome
relative overgeneralization on the target task itself. 


%
Our method assumes that the basis functions $\phi$ and the reward-weights $\bs{w}$ of the target task are known, but \citet{barreto2020fast} show that both can be learned using multi-task regression. Many choices for $\mathcal D$ are possible, but in the following we sample related tasks using a normal distribution centered around the target task $\bs{w} \in \mathbb{R}^d$ with a small variance $\sigma$ as $\mathcal{D} = \mathcal{N}(\bs{w}, \sigma \textbf{I}_d)$. This samples similar tasks closer to $\bs{w}$ more frequently. As long as $\sigma$ is large enough to cover tasks that do not induce RO (see Figure \ref{fig:spp_tasks}), our method works well and therefore, does not rely on any domain knowledge. The resulting task vectors weight the basis functions $\bs\phi$ differently and represent different reward functions. In particular the varied reward functions can make these tasks much easier, but also harder, to solve with monotonic value functions.
The consequences of sampling harder tasks on learning are discussed with the below action-selection schemes.


\textbf{Action-Selection Schemes}: \shorttitle uses two novel schemes to enable action selection based on related tasks. To emphasize the importance of the target task, we define a probability $\alpha$ of selecting actions based on the target task. Therefore, with probability $1 - \alpha$, the action is selected based on the related task. Similar to other exploration schemes, $\alpha$ is annealed from 0.3 to 1.0 in our experiments over a fixed number of steps at the beginning of training. Once this exploration stage is finished (i.e., $\alpha = 1$), actions are always taken based on the target task's joint action value function. Each action-selection scheme employs a local decentralized GPI policy, which maximizes over a set of policies $\pi_z$ based on
$\bs{z} \in \mathcal{C}_1$ (also referred to as the \textit{evaluation} set) to estimate the $Q$-values of another 
set of tasks $\bs{k} \in \mathcal{C}_2$ (also referred to as the \textit{target}  set)  using:
\begin{align} \label{scheme-gpi}
	\bs{u}_{t} =  \Big\{ u_{t}^a = \argmax_{u \in \mathcal{U}}
		\max_{\bs{k} \in \mathcal{C}_2} \max_{\bs{z} \in \mathcal{C}_1} 
		\overbrace{\bs{\psi}^a(\tau^a_t, u, \bs{z}; \theta)^\top \bs{k}
			}^{Q^a(\tau^a_t, u, \bs{z}, \bs{k})}
		\Big\}_{a \in \mathcal{A}}.
\end{align} 
Here $\mathcal{C}_1 = \nu \cup \{\bs{w}\}$ is the set of target and related tasks that induce the policies that are evaluated (dot-product) on the set of tasks $\mathcal{C}_2$, which varies with different action-selection schemes. The red box in Figure \ref{fig:arch} illustrates \shorttitle exploration. For example, $Q$-learning always picks actions based on the target task, i.e., the target set $\mathcal{C}_2= \{\bs{w}\}$. However, this scheme does not favour important joint actions. 
We call this default action-selection scheme \textbf{target GPI} and execute it with probability $\alpha$.
We now propose two novel action-selection schemes based on related tasks with probability $ 1-\alpha$, and thereby implicitly weighting joint actions during learning. 


\textbf{Uniform GPI}: At the beginning of each episode, this action-selection scheme uniformly picks \textit{one} related task, i.e., the target set $\mathcal{C}_2 = \{\bs{k} \sim \text{Uniform}(\nu)\}$, and selects actions based on that task using the GPI policy throughout the episode. This uniform task selection explores the learned policies of all related tasks in $\mathcal D$. This works well in practice as 
there are often enough simpler tasks to induce the required bias over important joint actions. However, if the sampled related task is harder than the target task, the action-selection based on these harder tasks might hurt learning on the target task and lead to higher variance during training.

\textbf{Greedy GPI}: At every time-step $t$, this action-selection scheme picks the task $\bs{k} \in \nu \cup \{\bs{w}\}$ that gives the highest $Q$-value amongst the related and target tasks, i.e., the target set becomes $\mathcal{C}_2 = \nu \cup \{\bs{w}\}$. Due to the greedy nature of this action-selection scheme, exploration is biased towards solved tasks, as those have larger values. We are thus exploring the solutions of tasks  that are both solvable and similar to the target task $\bs w$,
which should at least in some states lead to the 
same {\em optimal joint actions} as $\bs w$.

\textbf{NO-GPI}: To demonstrate the influence of GPI on the above schemes,
we also investigate ablations, where we define the evaluation set $\mathcal{C}_1 = \{\bs{k}\}$ to only contain the currently estimated task $\bs k$, i.e., using $\bs{u}_{t} =  \{ u_{t}^a = \argmax_{u \in \mathcal{U}}\max_{\bs{k} \in \mathcal{C}_2} \bs{\psi}^a(\tau^a_t, u, \bs{k}; \theta)^\top \bs{k}\}_{a \in \mathcal{A}}$ for action selection.

\section{Experiments}
\label{sec:experiments}
In this section, we evaluate \shorttitle on a variety of complex domains. For evaluation, all experiments are carried out with six random seeds and results are shown with $\pm$ standard error across seeds.  We compare our method against a number of SOTA value-based MARL approaches: IQL \citep{tan1993multi}, VDN \citep{sunehag2017value}, QMIX \citep{rashid2020monotonic}, MAVEN \citep{mahajan2019maven}, WQMIX \citep{rashid2020weighted}, QTRAN \citep{son2019qtran}, and QPLEX \citep{wang2020qplex}.

\textbf{Domain 1 : $m$-Step Matrix Game}

\begin{figure}[b!]
	\centering
	\includegraphics[width=.9\columnwidth]{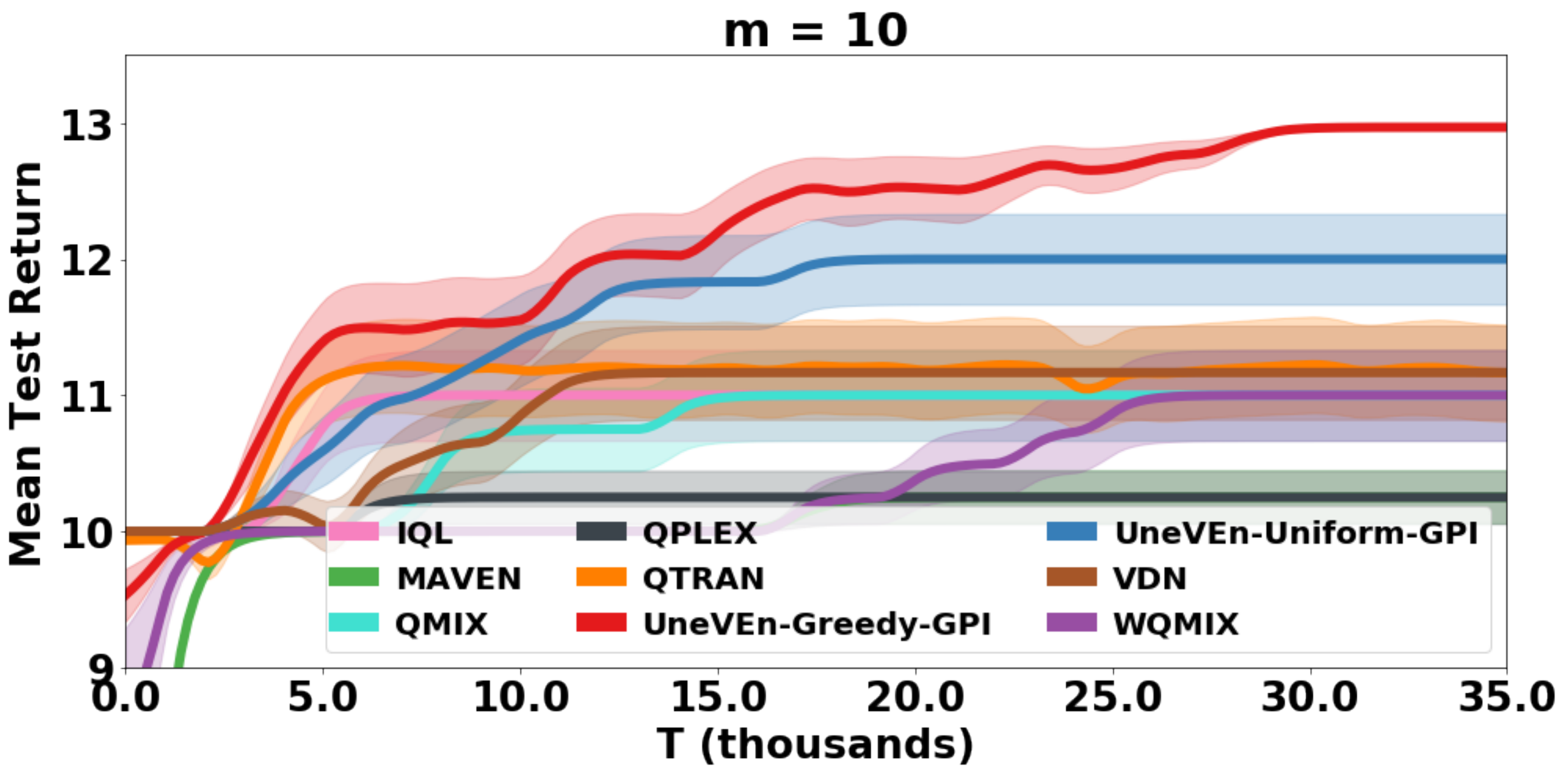}
	\caption{SOTA comparison on the $m$-step matrix game ($m = 10$).}
	\label{mstep:all}
\end{figure}

\begin{figure*}[htbp]
\centering     
\subfigure{\includegraphics[width=0.45\hsize]{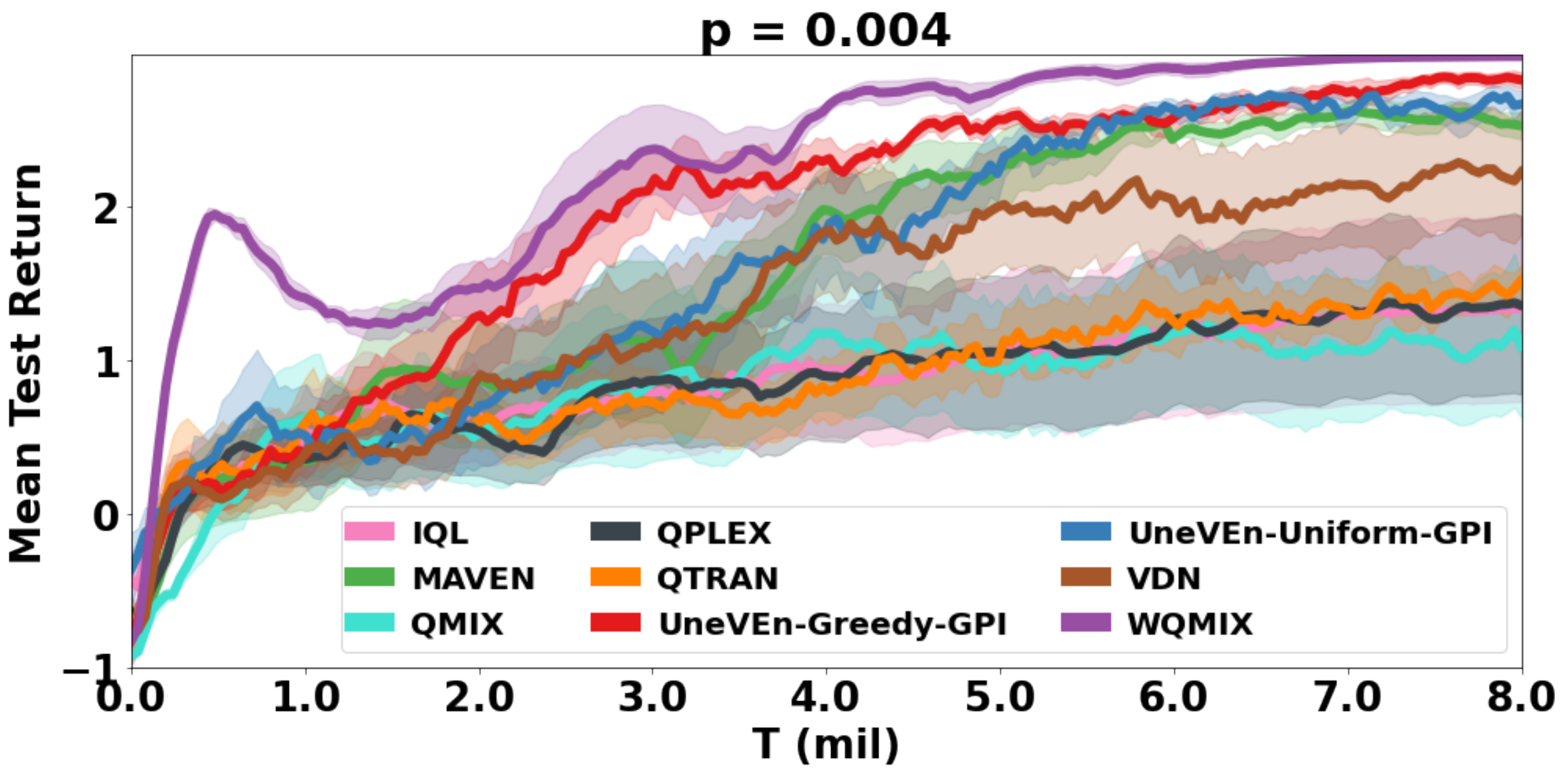}} 
\subfigure{\includegraphics[width=0.45\hsize]{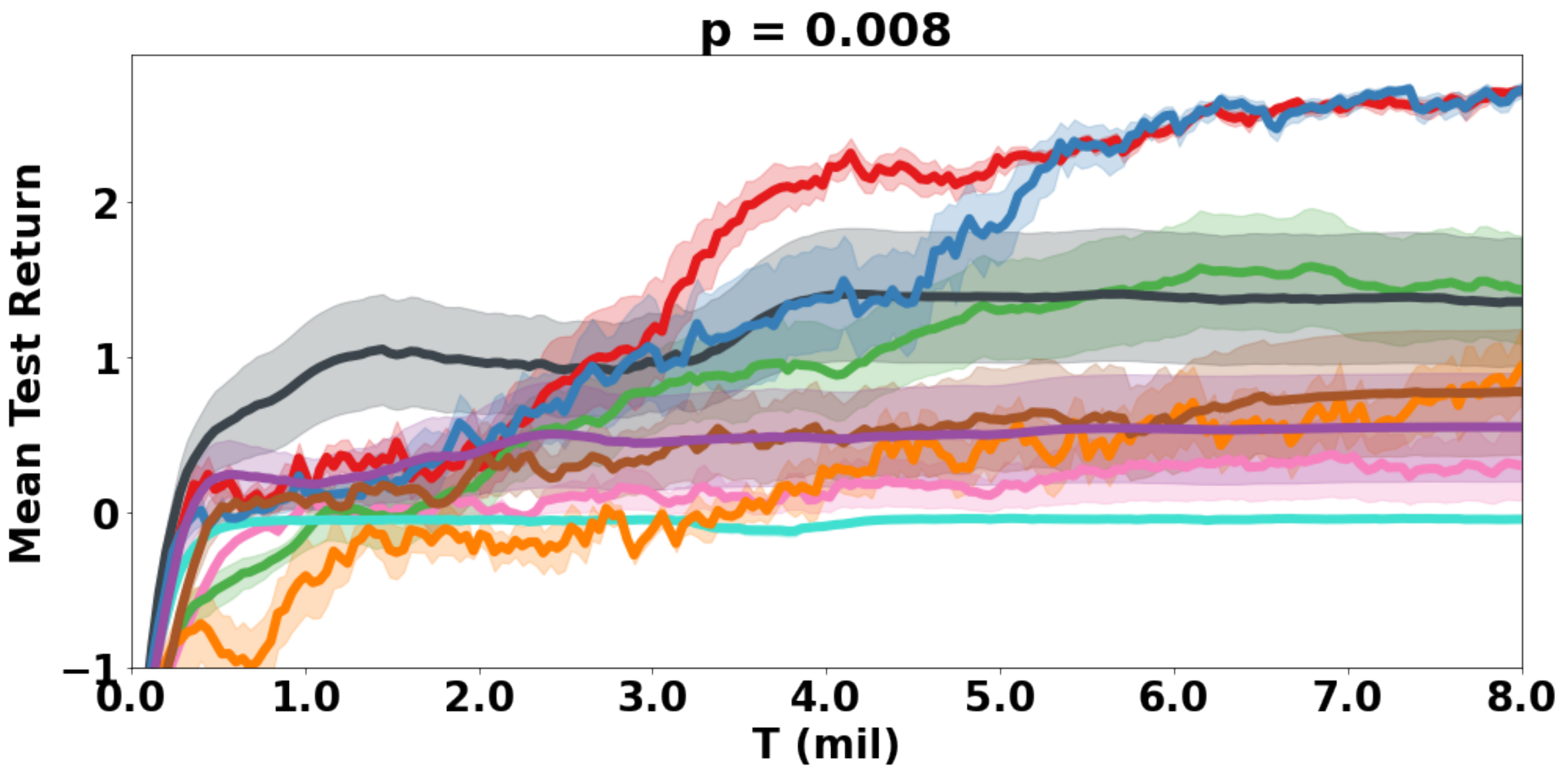}}
\par
\subfigure{\includegraphics[width=0.45\hsize]{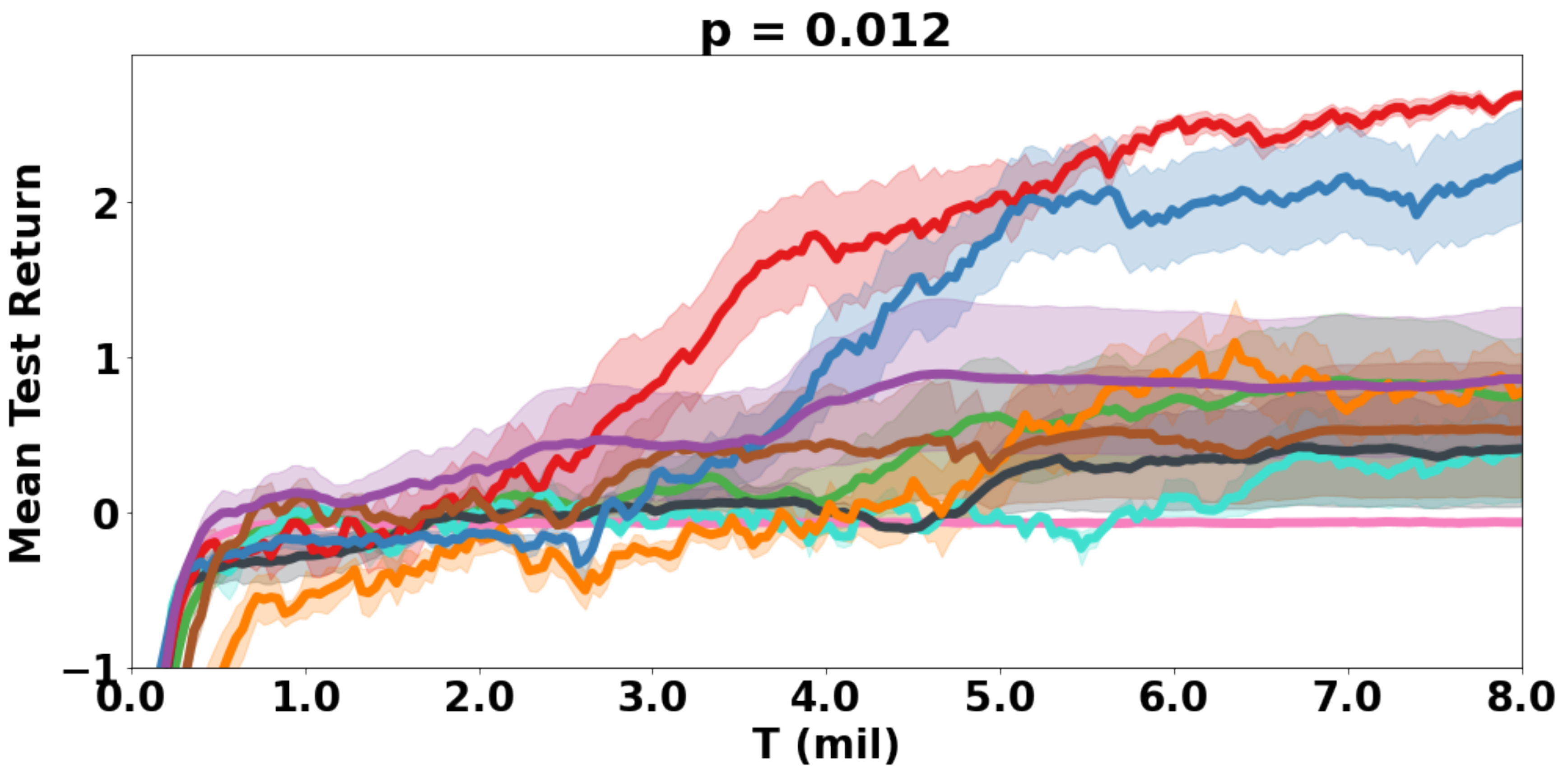}}
\subfigure{\includegraphics[width=0.45\hsize]{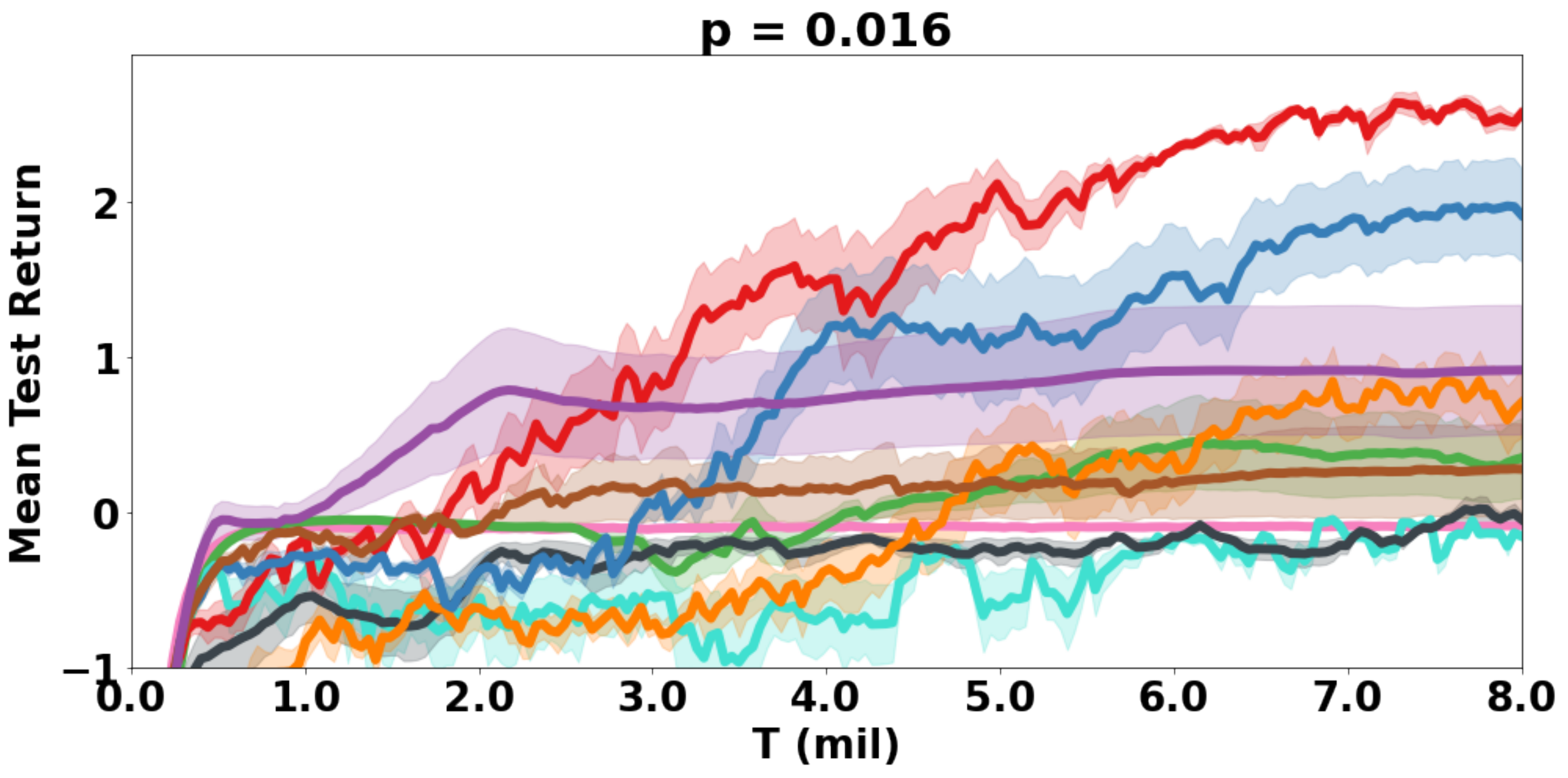}}
\caption{Comparison between \shorttitle and SOTA MARL baselines with $p \in \{0.004, 0.008, 0.012, 0.016\}$.}
\label{Fig3:all}
\end{figure*}

\begin{figure}[b!]
	\centering
	\includegraphics[width=.9\columnwidth]{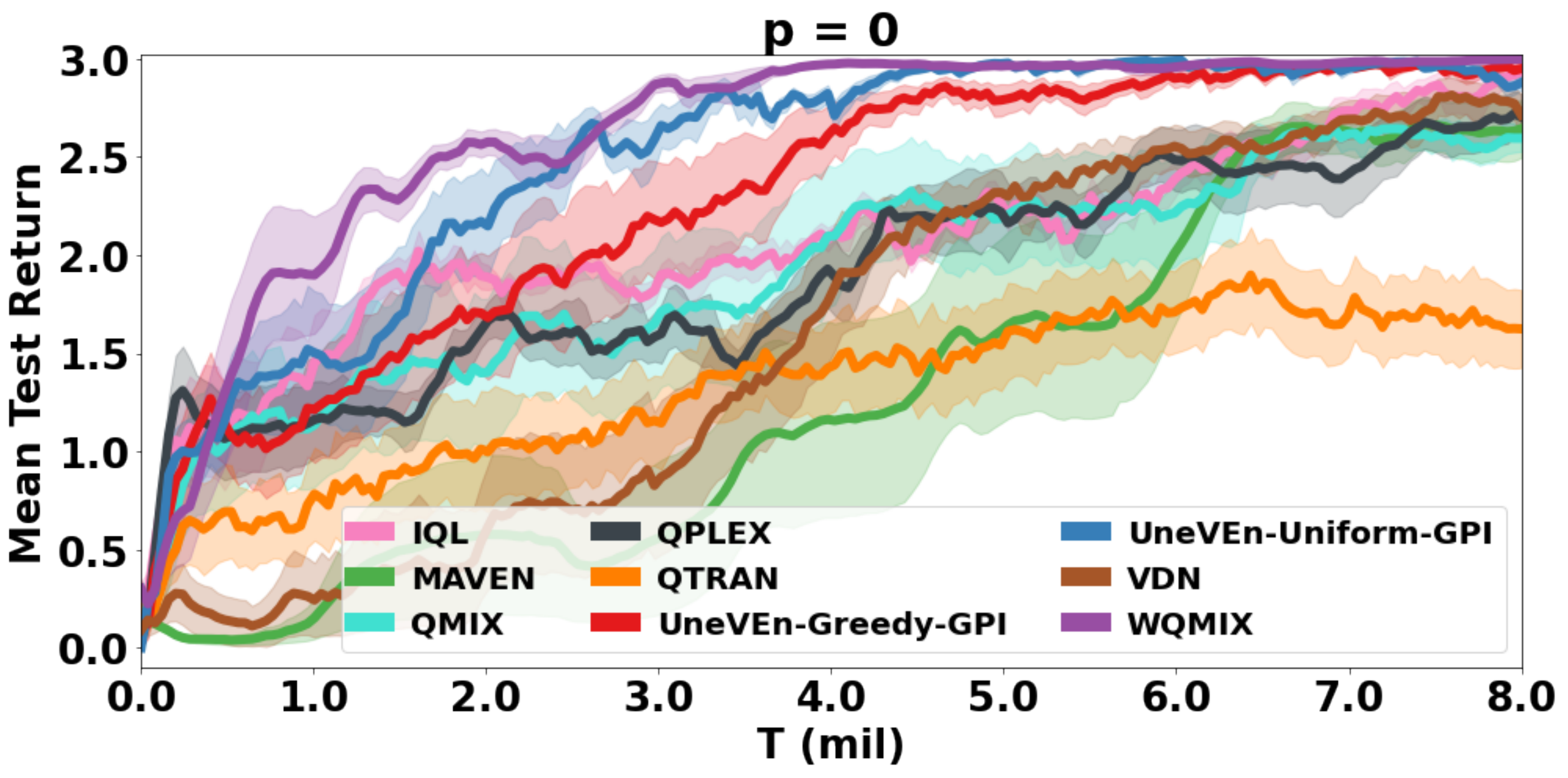}
	\caption{Baseline predator-prey results without RO ($p = 0$).}
	\label{fig:0.0}
\end{figure}

We first evaluate \shorttitle on the $m$-step matrix game proposed by \citet{mahajan2019maven}.
This task is difficult to solve using simple $\epsilon$-greedy exploration policies as committed exploration is required to achieve the optimal return. Appendix \ref{app-b} shows the $m$-step matrix game in which the first joint decision of two agents determines the maximal outcome after another $m-1$ decisions. One initial joint action can reach a return of up to $m+3$, whereas another only allows for $m$. This challenges monotonic value functions, as the target task exhibits RO. 
Figure \ref{mstep:all} shows the results of all methods on this task for $m=10$ after training for $35k$ steps. \shorttitle with greedy (\shorttitle-Greedy-GPI) action selection scheme converges to an optimal return and both greedy and uniform (\shorttitle-Uniform-GPI) schemes outperforms all other methods, which suffer from poor $\epsilon$-greedy exploration and often learn to take the suboptimal action in the beginning. Due to 
RO, it becomes difficult to change the policy later, leading to suboptimal returns and only rarely converging to optimal solutions.

\textbf{Domain 2 : Cooperative Predator-Prey} 

We next evaluate \shorttitle on challenging cooperative predator-prey  tasks similar to one proposed by \citet{son2019qtran}, but significantly more complex in terms of the coordination required amongst agents. We use a complex partially observable predator-prey (PP)  task involving eight agents (predators) and three prey that is designed to test coordination between agents, as each prey needs to be captured by at least three surrounding agents with a simultaneous \textit{capture} action. If only one or two surrounding agents attempt to capture the prey, a negative reward of magnitude $p$ is given. Successful capture yields a positive reward of +1. 
More details about the task are available in Appendix \ref{app-b}.

%

\textbf{Simpler PP Tasks}: We first demonstrate that both VDN and QMIX with monotonic value functions can learn on target tasks with simpler reward functions. 
To generate a simpler task, we remove the penalty associated with miscoordination, i.e., $p=0$, thereby making the target task free from RO. 
Figure \ref{fig:0.0} shows that both QMIX and VDN can solve this task as there is no miscoordination penalty and the monotonic value function can learn to efficiently represent the optimal joint action values. 
Other SOTA value-based approaches (MAVEN, WQMIX, and QPLEX) and \shorttitle with both uniform (\shorttitle-Uniform-GPI) and greedy (\shorttitle-Greedy-GPI) action-selection schemes can solve this target task as well. 


\begin{figure*}[t!]
\centering     
\subfigure{\includegraphics[width=0.33\hsize]{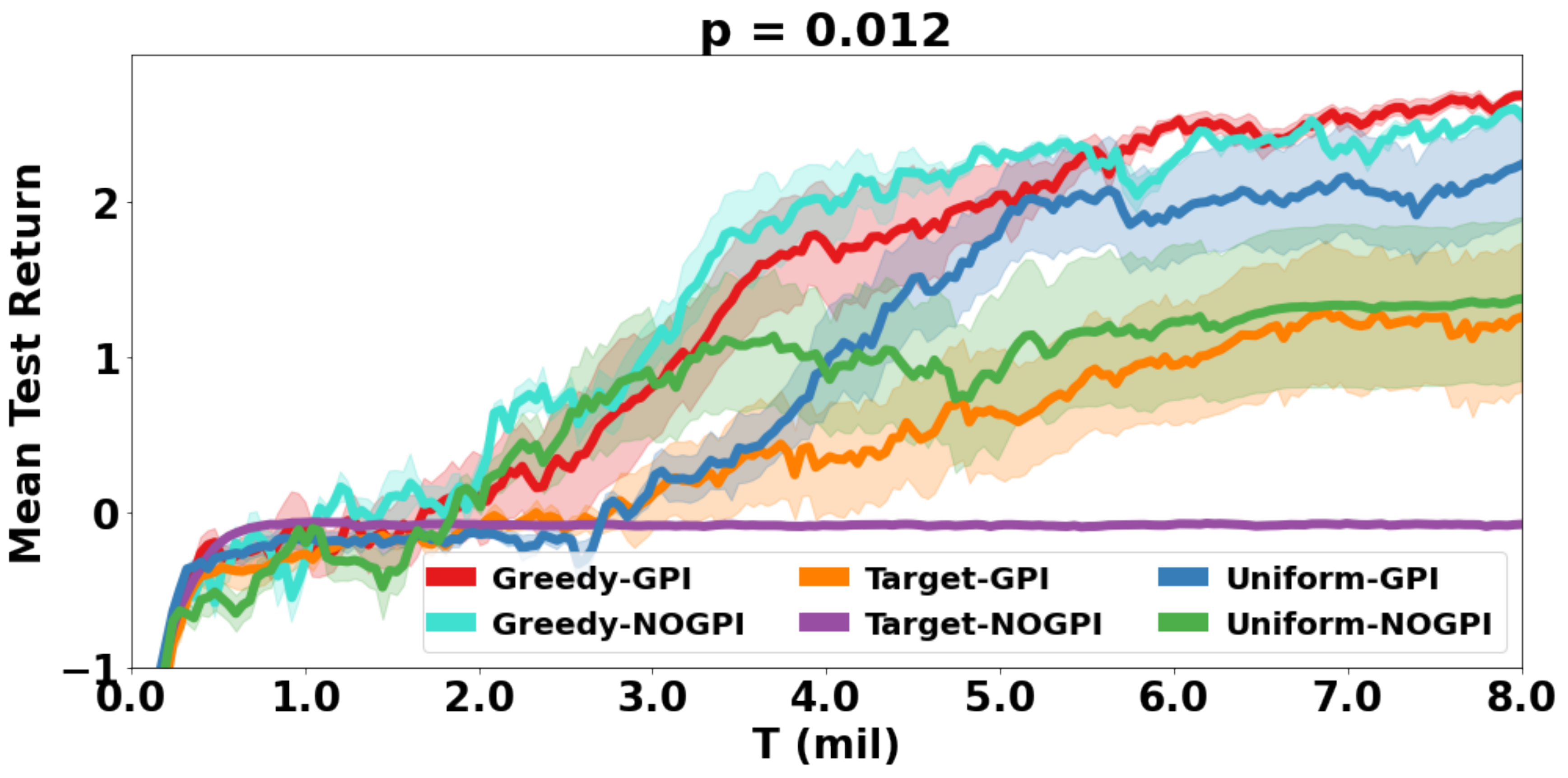}}
\subfigure{\includegraphics[width=0.33\hsize]{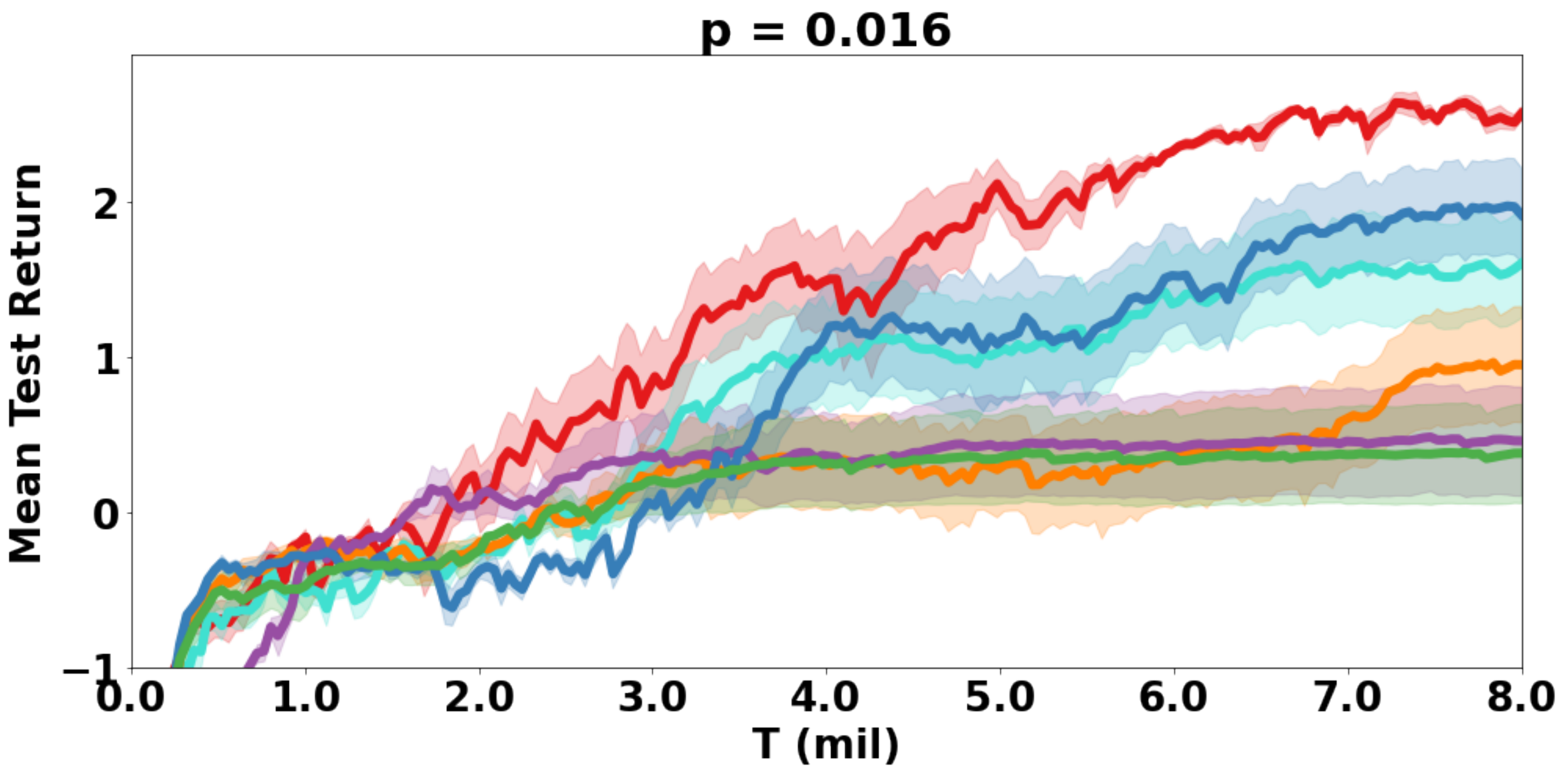}}
\subfigure{\includegraphics[width=0.33\hsize]{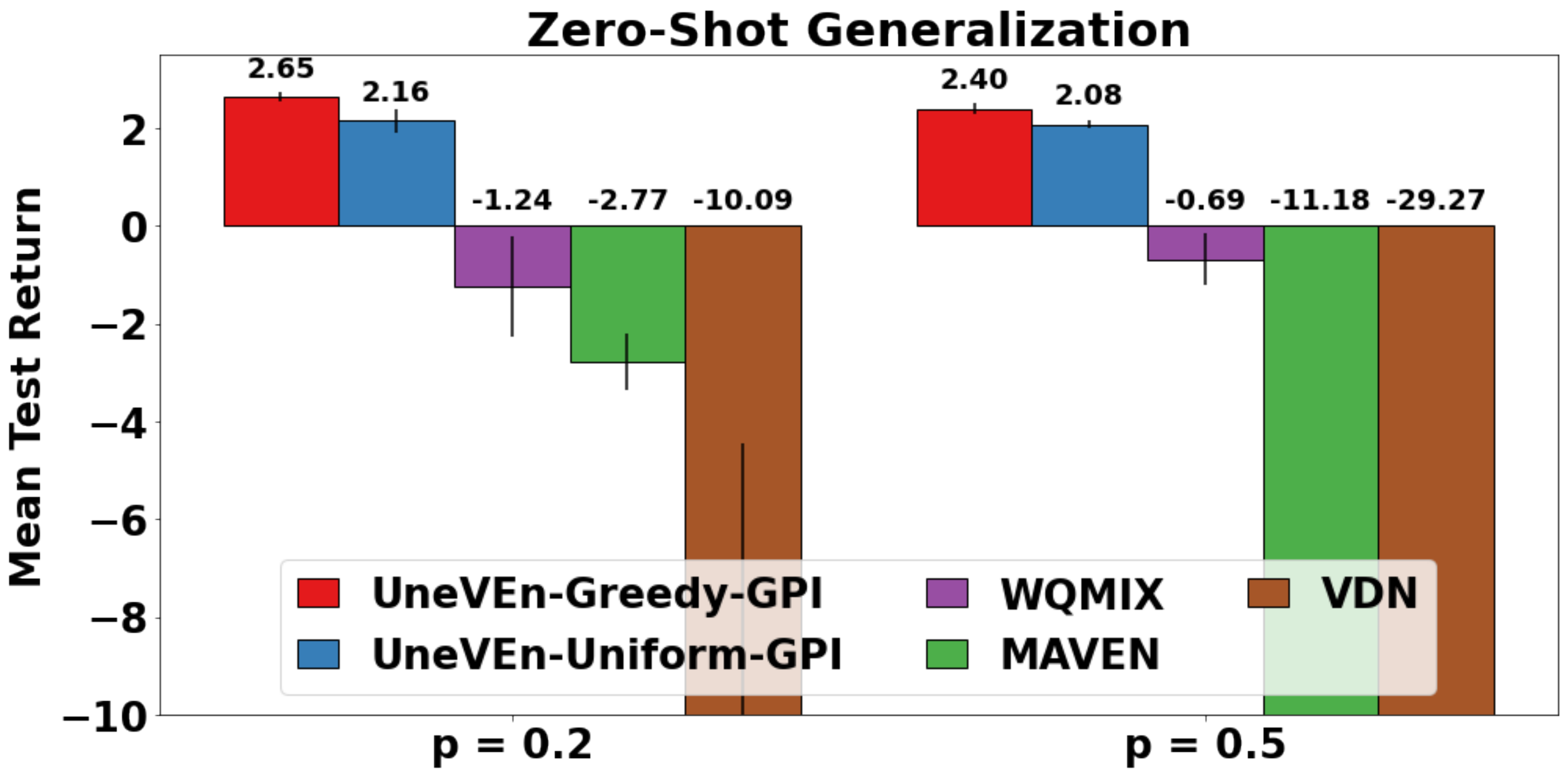}}
\caption{Ablation results. (left, middle): Comparison between different action selection schemes of \shorttitle for $p \in \{0.012, 0.016\}$. (right): Zero-shot generalization comparison; training on $p=0.004$, testing on $p \in \{0.2, 0.5\}$.
}
\label{Fig4:all}
\end{figure*}

\textbf{Harder PP Tasks}: We now raise the chance that RO occurs in the target task by increasing the magnitude of the penalty associated with each miscoordination, 
i.e., $p \in \{0.004, 0.008, 0.012, 0.016\}$. For a smaller penalty of $p = 0.004$, Figure \ref{Fig3:all} (top left) shows that VDN can still solve the task, further suggesting that simpler reward related tasks (with lower penalties) can be solved with monotonic approximations. However, both QMIX and VDN fail to learn on three other higher penalty target tasks due to their monotonic constraints, which hinder the accurate learning of the joint action value functions. 
Intuitively, when uncoordinated joint actions are much more likely than coordinated ones, the penalty term can dominate the average value estimated by each agent's utility.
This makes it difficult to learn an accurate monotonic approximation that  selects the optimal joint actions.  

Interestingly, other SOTA value-based approaches that aim to address the monotonicity restriction of QMIX and VDN such as MAVEN, QTRAN, WQMIX, and QPLEX also fail to learn on higher penalty tasks. WQMIX solves the task when $p = 0.004$, but fails on the other three higher penalty target tasks. Although WQMIX uses an explicit weighting mechanism to bias learning towards the target task's optimal joint actions, it must identify these actions by learning an unrestricted value function first. 
An $\epsilon$-greedy exploration based on the target task takes a long time to learn such a value function, which is visible in the large standard error for $p \in \{0.008, 0.012, 0.016\}$ in Figure \ref{Fig3:all}.
By contrast, both \shorttitle-Uniform-GPI and \shorttitle-Greedy-GPI can approximate  value functions of target tasks exhibiting severe RO more accurately and solve the task for all values of $p$. As expected, the variance of \shorttitle-Uniform-GPI is high on higher penalty target tasks (for e.g., $p = 0.016$) as exploration suffers from action selection based on harder related tasks. \shorttitle-Greedy-GPI does not suffer from this problem. Videos of learnt policies are available at \url{https://sites.google.com/view/uneven-marl/}.

 
We emphasize that while having an unrestricted joint value function (such as WQMIX, QPLEX and QTRAN) may allow to overcome RO, these algorithms are not guaranteed to do so in any reasonable amount of time.
For example, \citet{rashid2020weighted} note in Sec.~6.2.2 that training such a joint function can require much longer $\epsilon$-greedy exploration schedules.
The exact optimization of QTRAN is computationally intractable and a lot of recent work has shown the unstable performance of the corresponding approximate versions. Similarly QPLEX has shown to perform poorly on hard SMAC maps in our results and in \citet{rashid2020weighted}.
On the other hand, MAVEN learns a diverse ensemble of monotonic action-value functions, but does not concentrate on joint actions that would overcome RO if explored more.

\textbf{Ablations}: 
Figure \ref{Fig4:all} shows ablation results for higher penalty tasks, i.e., $p = \{0.012, 0.016\}$. To contrast the effect of \shorttitle on exploration, we compare our two novel action-selection schemes to \shorttitle-Target-GPI, which only selects the greedy actions of the target task.
The results clearly show that \shorttitle-Target-GPI  fails to solve the higher penalty RO tasks as the employed monotonic joint value function of the target task fails to accurately represent the values of different joint actions. This demonstrates the critical role of \shorttitle and its action-selection schemes.  

Next we evaluate the effect of GPI by comparing against \shorttitle with MAUSFs without using the GPI policy, i.e., setting the evaluation set $\mathcal{C}_1 = \{\bs{k}\}$ in \Eqref{scheme-gpi}. First, \shorttitle using a NOGPI policy with both uniform (Uniform-NOGPI) and greedy (Greedy-NOGPI) action selection outperforms Target-NOGPI, further strengthening the claim that \shorttitle with its novel action-selection scheme enables efficient exploration and bias towards optimal joint actions. In addition, the left and middle plots of Figure \ref{Fig4:all} shows that for each corresponding action-selection scheme (uniform, greedy, and target), using a GPI policy ($*$-GPI) is consistently favourable as it performs either similarly to the NOGPI policy ($*$-NOGPI)  or much better. GPI appears to improve zero-shot generalization of MAUSFs across tasks, which in turn enables good action selection for related tasks during \shorttitle exploration.


\begin{figure*}[htbp]
\centering     
\subfigure{\includegraphics[width=0.33\hsize]{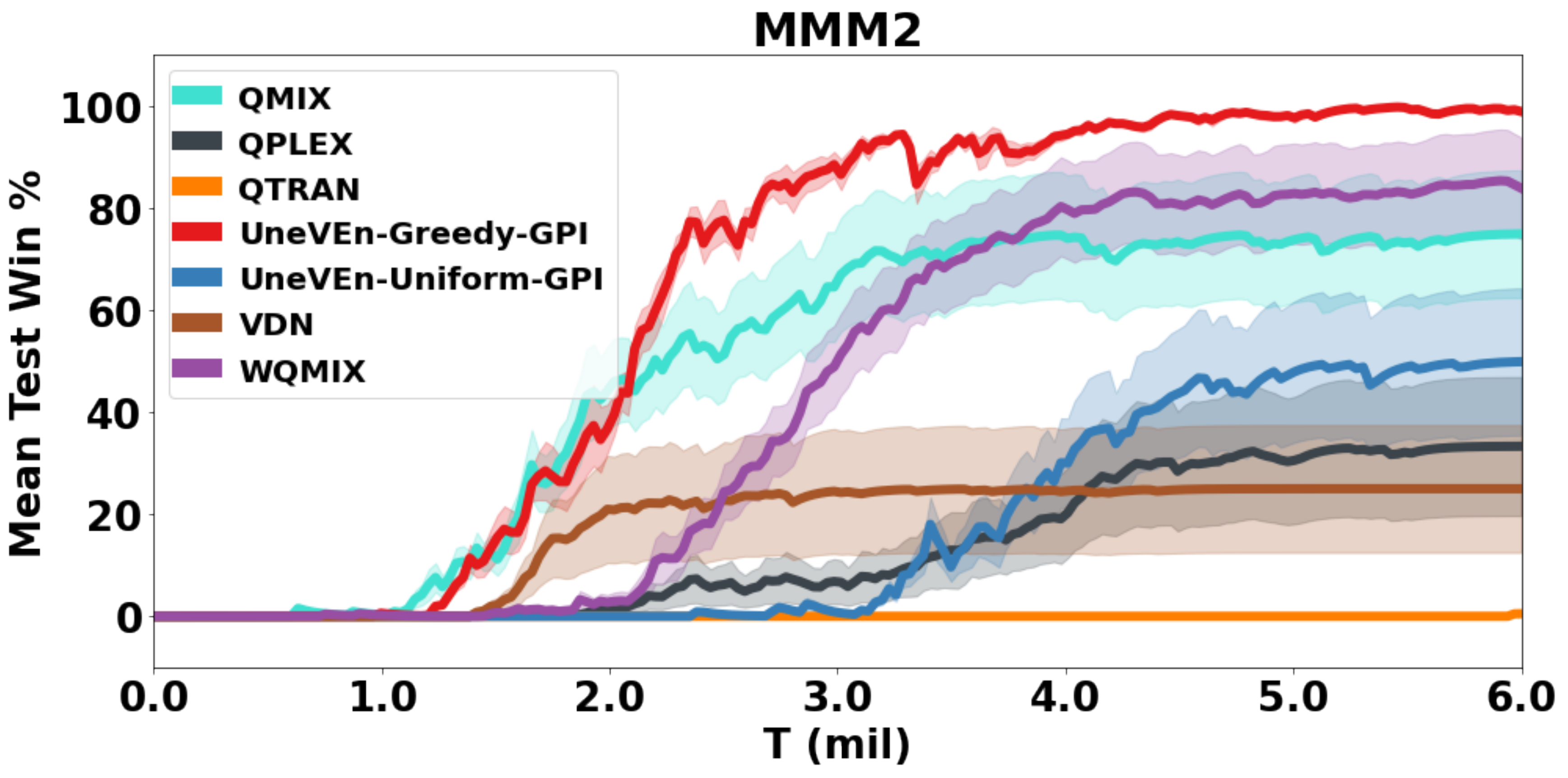}} 
\subfigure{\includegraphics[width=0.33\hsize]{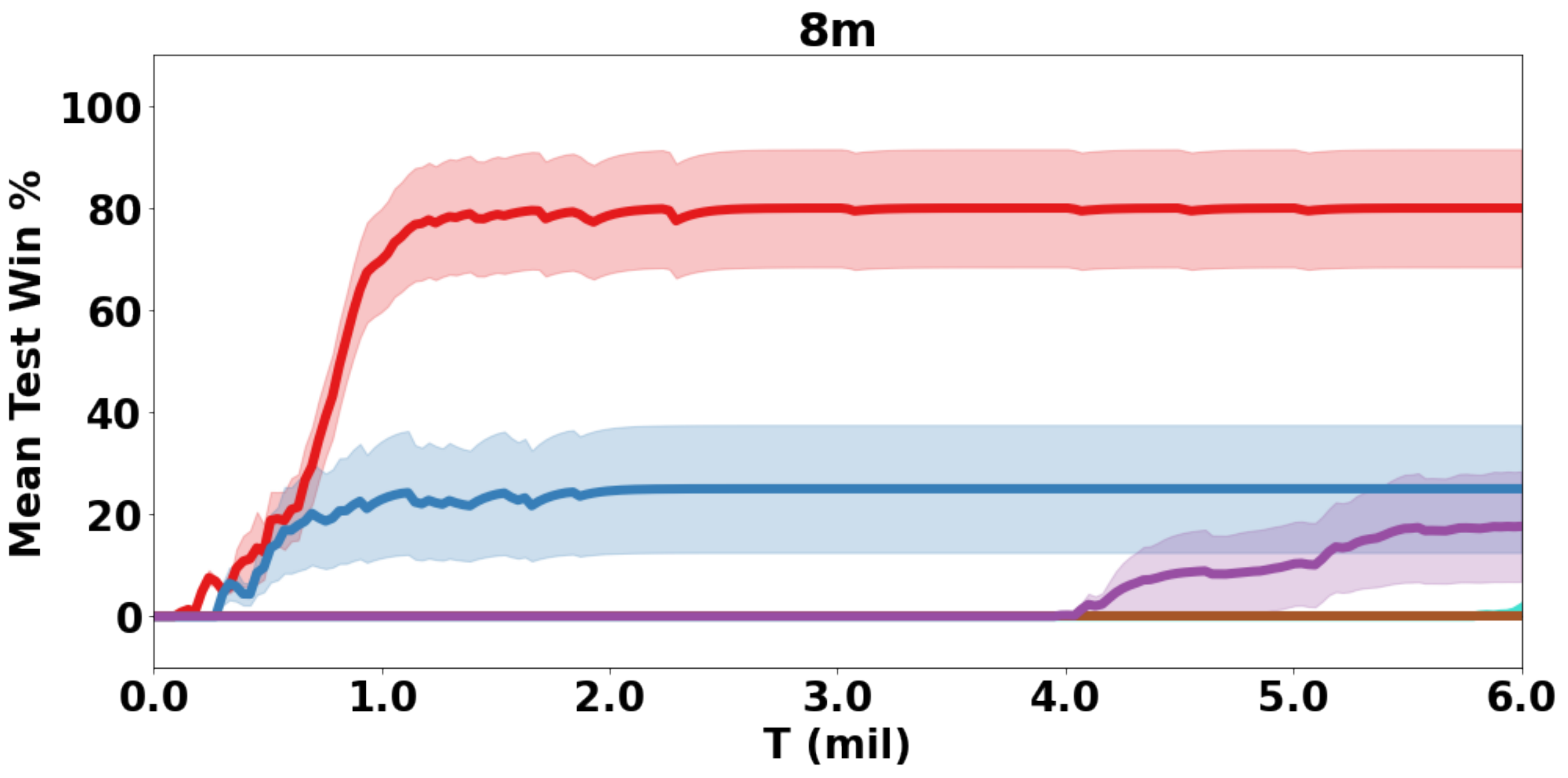}}
\subfigure{\includegraphics[width=0.33\hsize]{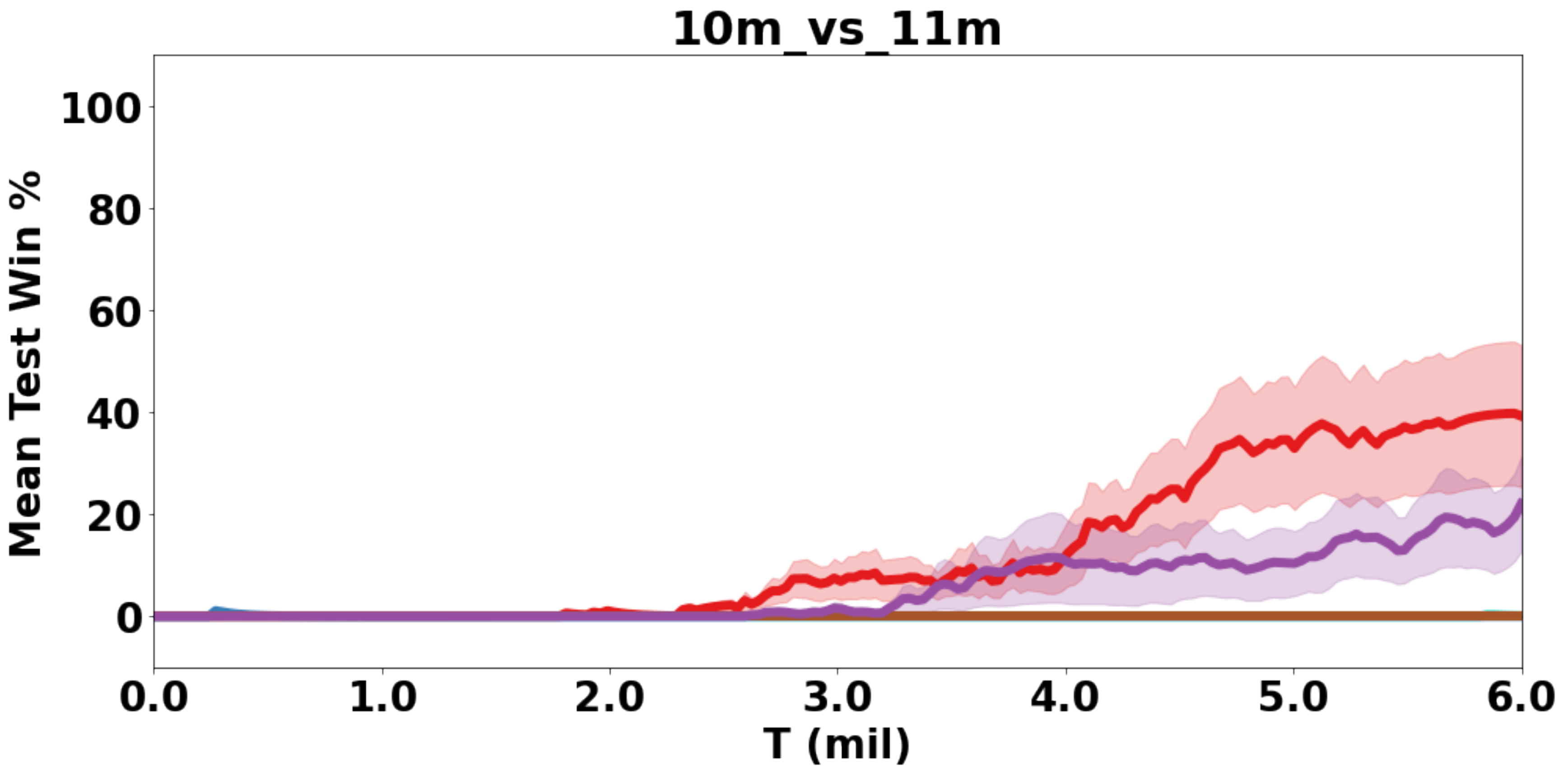}}
\caption{Comparison between \shorttitle and SOTA MARL baselines on SMAC maps with $p = 1.0$.}
\label{smac:all}
\end{figure*}

\textbf{Zero-Shot Generalization}: Lastly, we evaluate this zero-shot generalization for all methods to check if the learnt policies are useful for unseen high penalty test tasks. %
We train all methods for $8$ million environmental steps on a task with $p=0.004$, and test 60 rollouts of the resulting policies of all methods that are able to solve the training task, i.e., \shorttitle-Greedy-GPI, \shorttitle-Uniform-GPI, VDN,  MAVEN, and WQMIX, on tasks with $p\in \{0.2, 0.5\}.$
For policies trained with \shorttitle-Greedy-GPI and \shorttitle-Uniform-GPI, we use the NOGPI policy for the zero-shot testing, i.e.,
$\mathcal{C}_1 = \mathcal{C}_2 = \{\bs{w}\}$. 
The right plot of Figure \ref{Fig4:all} shows that \shorttitle with both uniform and greedy schemes exhibits great zero-shot generalization and solves both test tasks even with very high penalties. As MAUSFs learn the reward's basis functions, rather than the reward itself, zero-shot generalization to larger penalties follows naturally. Furthermore, using \shorttitle exploration allows the agents to collect enough diverse behaviour to come up with a near optimal policy for the test tasks.
On the other hand, the learnt policies for all other methods that solve the target task with $p = 0.004$ are ineffective in these higher penalty RO tasks, as they do not learn to avoid unsuccessful capture attempts.
See Appendix \ref{app-c} for details about the implementations and Appendix \ref{app-d} for additional ablation experiments.


\textbf{Domain 3 : Starcraft Multi-Agent Challenge (SMAC)}

We now evaluate \shorttitle on challenging cooperative StarCraft II (SC2) maps from the popular SMAC benchmark \citep{samvelyan2019starcraft}. 
Our method aims to overcome relative overgeneralization (RO) which happens very often in practice with cooperative games \citep{wei2016lenient}. However, the default reward function in SMAC does not suffer from RO as it has been designed with QMIX in mind, deliberately dropping any punishment for loosing ally units and thereby making it solvable for all considered baselines.
We therefore consider SMAC maps where each ally agent unit is additionally penalized ($p$) for being killed or suffering damage from the enemy, in addition to receiving positive reward for killing/damage on enemy units. A detailed discussion about different reward functions in SMAC along with their implications are discussed in Appendix \ref{app-d}. 
%
%
The additional penalty to the reward function (similar to QTRAN++, \citet{son2020qtran}) for losing our own ally units induces RO in the same way as in our predator prey example. This makes maps that were classified as ``easy" by default benchmark (e.g.,~``\texttt{8m}'' in Fig. 8) very hard to solve for methods like VDN and QMIX for $p=1.0$ (equally weighting the lives of ally and enemy units). 

Prior work on SC2 has established that VDN/QMIX can solve nearly all SMAC maps in the absence of any punishment (i.e., negative reward scaling $p = 0$) and we confirm this also holds for low punishments (see results in Figure \ref{Fig13:all} for $p = 0.5$ in Appendix \ref{app-d}). However, this puts little weight on the lives of your own allies and cautious generals may want to learn a more conservative policy with higher punishment for losing their ally units. Similarly to the predator-prey example, Figure \ref{smac:all} shows that increasing the punishment to $p=1$ (equally weighting the lives of allies and enemies) leads VDN/QMIX and other SOTA MARL methods like QTRAN, WQMIX and QPLEX to fail in many maps, whereas our novel method UneVEn, in particular with greedy action selection, reliably outperforms baselines on all tested SMAC maps. 

\vspace{-1mm}
\section{Related Work}

\textbf{Improving monotonic value function factorization in CTDE:} MAVEN \citep{mahajan2019maven} shows that the monotonic joint action value function of QMIX and VDN suffers from suboptimal approximations on nonmonotonic tasks. 
It addresses this problem by learning a diverse ensemble of monotonic joint action-value functions conditioned on a latent variable by optimizing the mutual information between the joint trajectory and the latent variable. Deep Coordination Graphs (DCG) \citep{bohmer2019deep} uses a predefined coordination graph \citep{guestrin2002coordinated} to 
represent the joint action value function. However, DCG is not a fully decentralized approach and 
specifying the coordination graph can require significant domain knowledge. \citet{son2019qtran} propose QTRAN, which addresses the monotonic restriction of QMIX by learning a (decentralizable) VDN-factored joint action value function along with an unrestricted centralized critic. The corresponding utility functions are distilled from the critic by solving a linear optimization problem involving all joint actions, but its exact implementation is  computationally intractable and the corresponding approximate versions have unstable performance. \textsc{Tesseract} \citep{mahajan2021tesseract} proposes a tensor decomposition based method for learning arbitrary action-value functions, they also provide sample complexity analysis for their method. QPLEX \citep{wang2020qplex} uses a duplex dueling \citep{wang2016dueling} network architecture to factorize the joint action value function with linear decomposition structure. WQMIX \citep{rashid2020weighted} learns a QMIX-factored joint action value function along with an unrestricted centralized critic 
and proposes explicit weighting mechanisms to bias the monotonic approximation of the optimal joint action value function towards important joint actions, which is similar to our work. However, in our work, the weightings are implicitly done through action-selection based on related tasks, which are easier to solve.

\textbf{Exploration:} 
There are many techniques for 
exploration in model-free single-agent RL, based on  
intrinsic novelty reward \citep{bellemare2016count, tang2017hash},
predictability \citep{pathak2017exploration}, 
pure curiosity \citep{burda2019curiosity}
or Bayesian posteriors \citep{osband2016rvf, gal2017dropout, fortunato18noisynet, odonoghue2018ube}.
In the context of multi-agent RL,
\citet{boehmer19unreliable} discuss the influence of unreliable intrinsic reward
and \citet{wang2020influence} quantify the influence 
that agents have on each other's return. Similarly, \citet{wang2020rode} propose a learnable action effect based role decomposition which eases exploration in the joint action space.
\citet{zheng2018structured} propose to coordinate 
exploration between agents by shared latent variables, whereas
\citet{jaques2018social} investigate the social motivations of competitive agents.
However, these techniques aim to visit as much 
of the state-action space as possible,
which exacerbates the relative overgeneralization pathology.
Approaches that use state-action space abstraction can speed up exploration, these include those which can automatically learn the abstractions \citep[e.g.,][]{mahajan2017symmetryde, mahajan2017symmetryl} and those which require prior knowledge \citep[e.g.,][]{roderick18abstract}, however they are difficult to scale for multi-agent scenarios.
In contrast, \shorttitle explores similar {\em tasks}.
This guides exploration to states and actions that prove {\em useful},
which restricts the explored space and overcomes relative overgeneralization.  
To the best of our knowledge, 
the only other work that explores the task space 
is \citet{leibo19autocurricula}:
they use the evolution of competing agents
as an auto-curriculum of harder and harder tasks.
Collaborative agents cannot compete against each other, though, 
and their approach therefore is not suitable to learn cooperation.


\textbf{Successor Features:} Most of the work on SFs have been focused on single-agent settings \citep{dayan1993improving, kulkarni2016deep, lehnert2017advantages, barreto2017successor, barreto2018transfer, borsa2018universal, lee2019truly, hansen2019fast}  for transfer learning and zero-shot generalization across tasks with different reward functions. 
\citet{gupta2019successor} use single-agent SFs in a multi-agent setting to estimate the probability of events, but they only consider transition independent MARL \citep{becker2004solving, Gupta2018PlanningAL}. 

\section{Conclusion}
This paper presents a novel approach decomposing multi-agent universal successor features (MAUSFs) into local agent-specific SFs, which enable a decentralized version of the GPI to maximize over a combinatorial space of agent policies. We propose \shorttitle, which leverages the generalization power of MAUSFs to perform action selection based on simpler related tasks to address the suboptimality of the target task's monotonic joint action value function in current SOTA methods. Our experiments show that \shorttitle significantly outperforms VDN, QMIX and other state-of-the-art value-based MARL methods on challenging tasks exhibiting severe RO. 

\section{Future Work}
\shorttitle relies on the assumption that both the basis functions and the reward-weights of the target task are known which restricts the applicability of the method. Moreover, our SFs based method requires learning in $d$ dimensions instead of learning scalar values for each action in the case of $Q$-learning, which decreases the scalability of our method for domains where $d$ is very large. More efficient neural network architectures with hyper networks \citep{ha2016hypernetworks} can be leveraged to handle larger dimensional features. Finally, the paradigm of UneVEn with related task based action selection can be directly applied with universal value functions (UVFs) \citep{schaul2015universal} to enable other state-of-the-art nonlinear mixing techniques like QMIX \citep{rashid2020monotonic} and QPLEX \citep{wang2020qplex} to overcome RO, at the cost of losing ability to perform GPI.

\section*{Acknowledgements}
We thank Christian Schroeder de Witt, Tabish Rashid and other WhiRL members for their feedback. Tarun Gupta is supported by the Oxford University Clarendon Scholarship. Anuj Mahajan is funded by the J.P. Morgan A.I. fellowship. This project has received funding from the European Research Council under the European Union’s Horizon 2020 research
and innovation programme (grant agreement number 637713). The experiments were made possible by a generous equipment grant from NVIDIA.

\bibliography{references}
\bibliographystyle{icml2021}

\onecolumn
\appendix
\section{Formal derivation of the illustrative example in Section \ref{sec:example}}
\label{sec:app-example}
A VDN decomposition of the joint state-action-value function 
in the simplified predator-prey task shown in Figure \ref{fig:spp_env} is:
$$
	Q^\pi(s, u^1\!, u^2) \quad=\quad 
	\E\Big[ r(s,u^1\!,u^2) + \gamma V^\pi(s') 
		\,\Big|\!\!\begin{array}{c}
			\scriptstyle u^2 \sim \pi^2(\cdot|s) \\[-1mm]
			\scriptstyle s' \sim P(\cdot|s,u^1\!\!,u^2)
		\end{array}\!\!\Big]
	\quad\approx\quad 
	Q^1(s,u^1) + Q^2(s, u^2) 
	\quad =: \quad Q_\text{tot}(s, u^1, u^2) \,.
$$
Here $V^\pi(s')$ denotes the expected return of state $s' \in \Set S$. 
However, in this analysis we are mainly interested in the behavior at 
the beginning of training, and so we assume in the following that 
the estimated future value is close to zero, 
i.e.~$V^\pi(s') \,\approx\, 0, \forall s' \in \Set S$.
The VDN decomposition allows to derive a decentralized policy
$\pi(u^1,u^2|s) := \pi^1(u^1|s) \, \pi^2(u^2|s)$,
where $\pi^i$ is usually an $\epsilon$-greedy policy
based on agent $i$'s utility $Q^i(s, u^i)$, $i \in \{1,2\}$.
Similar to \citet{rashid2020weighted},
we analyze here the corresponding VDN projection operator:
$$
	\Pi_\text{vdn}^\pi [Q^\pi] \quad:=\quad \argmin_{Q_\text{tot}}
	\sum_{u^1 \in \Set U^1} \sum_{u^2 \in \Set U^2}
		\pi(u^1, u^2|s) \, \Big( 
			Q_\text{tot}(s, u^1, u^2) - Q^\pi(s, u^1, u^2)
		\Big)^{\!2} \,.
$$
Setting the gradient of the above mean-squared loss 
\wbr{}{(with $V^\pi(s')=0$) }to zero yields
for $Q^1$ (and similarly for $Q^2$):
$$
	Q^1(s, u^1) \quad \stackrel{!}{=} \quad \sum_{u^2 \in \Set U^2}
		\pi^2(u^2|s) \Big( \underbrace{r(s, u^1, u^2) 
		+ \gamma \E[\, V^\pi(s')\,|\, {\scriptstyle s' \sim P(\cdot|s,u^1,u^2)}]
		}_{Q^\pi(s, u^1, u^2)} - Q^2(s,u^2) \Big) \,.
$$ 
In the following we use $\bar V_{u^1} := \max_{u^2} 
\E[\, V^\pi(s')\,|\, {\scriptstyle s' \sim P(\cdot|s,u^1,u^2)}]$.
Relative overgeneralization \citep{panait2006biasing}
occurs when an agent's utility (for example agent $1$'s)
of the catch action $u^1=C$
falls below the movement actions $u^1 \in \{L,R\}$: 
$$
\begin{array}{rrcl}
	& Q^1(s, C) & < & Q^1(s, L/R) 
\\[2mm] \Leftrightarrow & \qquad
	\smallsum{u^2 \in \Set U^2}{} \pi^2(u^2|s) \, 
		\big( r(s, C, u^2) - Q^2(s,u^2) \big)
	& < &
	\smallsum{u^2 \in \Set U^2}{} \pi^2(u^2|s) \, 
		\big( Q^\pi(s, L/R, u^2) - Q^2(s,u^2) \big)
\\[2mm] \Rightarrow &
	r \, \pi^2(C|s) - p \, \pi^2(L|s) - p \, \pi^2(R|s) 
	& < & 
	-p \, \pi^2(C|s) \, + \,\gamma \big(1 - \pi^2(C|s)\big) \bar V_{L/R}
\\[2mm] \Leftrightarrow &
	r \, \pi^2(C|s) 
	& < & 
	p \, \big(\pi^2(L|s) + \pi^2(R|s) - \pi^2(C|s) \big)
	\, + \,\gamma \big(1 - \pi^2(C|s)\big) \bar V_{L/R}
\\[2mm] \Leftrightarrow &
	r \, \pi^2(C|s) 
	& < & 
	p \, \big(1 - 2 \pi^2(C|s) \big)
	\, + \,\gamma \big(1 - \pi^2(C|s)\big) \bar V_{L/R}
\\[2mm] \Leftrightarrow &
	r  
	& < & 
	p \, \big(\smallfrac{1}{\pi^2(C|s)} - 2 \big)
	\, + \,\gamma \big(1 - \pi^2(C|s)\big) \bar V_{L/R}
\end{array}
$$
This demonstrates that, at the beginning of training with $\bar V_{L/R}=0$,
relative overgeneralization occurs at $p > r$ for $\pi^2(C|s)=\frac{1}{3}$,
at $p > \frac{3}{2}\, r$ for $\pi^2(C|s)=\frac{3}{8}$,
at $p > 3 \, r$ for $\pi^2(C|s)=\frac{3}{7}$,
and at {\em never} at $\pi^2(C|s) > \frac{1}{2}$,
as shown in Figure \ref{fig:spp_tasks}.
Note that the value $\bar V_{L/R}$ contributes a constant
w.r.t.~punishment $p$, that is scaled by $1 - \pi^2(C|s)$,
and that positive values make the problem harder (require lower $p$).
An initial relative overgeneralization will make agents choose the wrong
greedy actions, which lowers $\pi^2(C|s)$ even further and therefore 
solidifies the pathology when $\epsilon$ get's annealed.
Empirical thresholds in experiments can differ significantly, 
though, as random sampling increases the chance 
to over/underestimate $\pi^2(C|s)$ during the exploration phase
and the estimated value $V^\pi(s')$ has a non-trivial influence.
Furthermore, the above thresholds cannot be transferred directly 
to the experiments conducted in Section \ref{sec:experiments},
which are much more challenging:
(i) it requires 3 agents to catch a prey, 
(ii) agents have to explore 5 actions,
(iii) agents can execute catch actions when they are alone with the prey,
and (iv) catch actions do not end the episode,
increasing the influence of empirically estimated $V^\pi(s')$.
Empirically we observe that VDN fails to solve tasks
somewhere between $p = 0.004 \, r$ and $p=0.008\,r$.
The presented analysis provides therefore only
an illustrative example how \shorttitle can help 
to overcome relative overgeneralization.

\section{Training algorithm} \label{app-a}
Algorithm \ref{train-mausf} and \ref{train-uneven} presents the training of MAUSFs with \shorttitle. Our method is able to learn on all tasks (target $\bs{w}$ and sampled $\bs{z}$) simultaneously in a sample efficient manner using the same feature $\bs\phi_t \equiv \bs{\phi}(s_t, \bs{u}_t)$ due to the linearly decomposed reward function (\Eqref{uvf-q}). 

\begin{algorithm}[htbp]
\caption{Training MAUSFs with UneVEn}
\label{train-mausf}
\begin{algorithmic}[1]
\STATE \textbf{Require} $\epsilon$, $\alpha$, $\beta$ target task $\bs{w}$, set of agents $\Set A$, standard deviation $\sigma$
\vspace{1mm}
\STATE Initialize the local-agent SF network $\bs{\psi}^a(\tau^a, u^a, \boldsymbol{z}; \theta)$ and replay buffer $\Set M$
\vspace{1mm}
\FOR {fixed number of \textbf{epochs}}
\STATE $\nu \sim \mathcal{N}(\bs{w}, \sigma \textbf{I}_d)$\,; \quad
		$\bs o_0 \leftarrow$ \textsc{ResetEnv}()
\STATE $t = 0 \,; \quad \Set M \leftarrow$ \textsc{NewEpisode}$(\Set M, \nu, \bs o_0)$
\vspace{1mm}
\WHILE {not \textbf{terminated}}
\STATE \textbf{if} Bernoulli($\epsilon$)=1 then $\bs{u}_t \leftarrow \text{Uniform}(\bs{\mathcal{U}})$
\STATE \textbf{else}  $\bs{u}_t \wbr{}{= \{u_t^a\}} \leftarrow \textsc{UneVEn}(\bs{\tau}_{t}, \nu)$
\STATE $\langle \bs o_{t+1}, \bs\phi_t \rangle \leftarrow$ \textsc{EnvStep}$(\bs u_t)$\STATE $\Set M \leftarrow$ \textsc{AddTransition}$\big(\Set M, \bs u_t, \bs o_{t+1}, \bs \phi_t\big)$
\STATE $t \leftarrow t + 1$
\ENDWHILE
\vspace{1mm}
\STATE $\Set L \leftarrow 0\,;\quad \mathcal{B} \leftarrow$ \textsc{SampleMiniBatch}$(\Set M)$
\vspace{1mm}
\FORALL{$\{\bs\tau_t, \bs u_t, \bs\phi_t, \bs\tau_{t+1}, \nu\} \in \Set B$}
\vspace{1mm}
\FORALL{$\bs{z} \in \nu \cup \{\bs{w}\}$} 
\STATE $\bs{u}_{\bs{z}}' \leftarrow \big\{\argmax\limits_{u \in \mathcal{U}} \bs{\psi}^a(\tau^a_{t+1}, u, \bs{z}; \theta)^\top \bs{z} \big\}_{a \in \Set A}$ 
\vspace{-0.5mm}
\STATE $\Set L \leftarrow \Set L + \big\|
\bs{\phi}_t + \gamma \bs{\psi}_{tot}(\bs\tau_{t+1}, \bs{u'}_{\bs{z}}, \bs z; \theta^{-}) - \bs{\psi}_{tot}(\bs\tau_t, \bs u_t, \bs z; \theta) \big\|^2_2$
\ENDFOR
\ENDFOR
\STATE $\theta \leftarrow$ \textsc{Optimize}$(\theta, \nabla_{\!\theta} \Set L)$
\STATE $\theta^- \leftarrow (1-\beta)\,\theta^- + \beta \, \theta$
\ENDFOR

\end{algorithmic}
\end{algorithm}

\begin{algorithm}[htbp]
\caption{$\textsc{UneVEn}(\bs{\tau}_{t}, \nu)$}
\label{train-uneven}
\begin{algorithmic}[1]
\IF{Bernoulli$(\alpha)=1$ \textbf{or} Scheme is {\bf Target}}
\STATE $\Set C_2 \leftarrow \{\bs w\}$
\ELSE
\IF{Scheme is \textbf{Uniform}}
\STATE $\mathcal{C}_2 \leftarrow \nu \sim \text{Uniform}(\nu)$
\ELSIF{Scheme is \textbf{Greedy}}
\STATE $\mathcal{C}_2 \leftarrow \nu \cup \{\bs{w}\}$
\ENDIF
\ENDIF
\IF{Use\_GPI\_Policy is \textbf{True}}
\STATE $\mathcal{C}_1 \leftarrow   \nu \cup \{\bs{w}\}$
\STATE $\bs{u}_{t} \leftarrow  \{ u_{t}^a = \argmax\limits_{u \in \mathcal{U}}\max\limits_{\bs{k} \in \mathcal{C}_2} \max\limits_{\bs{z} \in \mathcal{C}_1} \bs{\psi}^a(\tau^a_t, u, \bs{z}; \theta)^\top \bs{k}\}_{a \in \mathcal{A}}$
\vspace{-2mm}
\ELSE
\STATE $\bs{u}_{t} \leftarrow  \{ u_{t}^a = \argmax\limits_{u \in \mathcal{U}}\max\limits_{\bs{k} \in \mathcal{C}_2} \bs{\psi}^a(\tau^a_t, u, \bs{k}; \theta)^\top \bs{k}\}_{a \in \mathcal{A}}$
\vspace{-2mm}
\ENDIF
\STATE \textbf{return} $\bs{u}_{t}$
\end{algorithmic}
\end{algorithm}

\section{Experimental Domain Details and Analysis} \label{app-b}
\subsection{Predator-Prey}
 We consider a complicated partially observable predator-prey (PP)  task in an $10 \times 10$ grid involving eight agents (predators) and three prey that is designed to test coordination between agents, as each prey needs a simultaneous \textit{capture} action by at least three surrounding agents to be captured. Each agent can take 6 actions i.e. move in one of the 4 directions (Up, Left, Down, Right), remain still (no-op), or try to catch (capture) any adjacent prey. The prey moves around in the grid with a probability of $0.7$ and remains still at its position with probability $0.3$. Impossible actions for both agents and prey are marked unavailable, for eg. moving into an occupied cell or trying to take a capture action with no adjacent prey.

If either a single or a pair of agents take a capture action on an adjacent prey, a negative reward of magnitude $p$ is given. If three or more agents take the capture action on an adjacent prey, it leads to a successful capture of that prey and yield a positive reward of $+1$. The maximum possible reward for capturing all prey is therefore $+3$. Each agent observes a $5 \times 5$ grid centered around its position which contains information showing other agents and prey relative to its position. An episode ends if all prey have been captured or after $800$ time steps. This task is  similar to one proposed by \citet{bohmer2019deep, son2019qtran}, but significantly more complex in terms of the coordination required amongst agents as more agents need to coordinate simultaneously to capture the prey. 

This task is challenging for two reasons. First, depending on the magnitude of $p$, exploration is difficult as even if a single agent miscoordinates, the penalty is given, and therefore, any steps toward successful coordination are penalized. Second, the agents must be able to differentiate between the values of successful and unsuccessful collaborative actions, which monotonic value functions fail to do on tasks exhibiting RO.

\begin{table}[t!]
\centering 
\begin{tabular}{ cc } 
A1-Capture & A1-Other \\  
\begin{tabular}{ccc}
\hline
           & A2-Capture & A2-Other \\ \hline
A3-Capture & +1         & $-p$      \\
A3-Other   & $-p$         & $-p$      \\ \hline
\end{tabular} &  
\begin{tabular}{ccc}
\hline
           & A2-Capture & A2-Other \\ \hline
A3-Capture & $-p$        & $-p$      \\
A3-Other   & $-p$         & 0        \\ \hline
\end{tabular} \\
\end{tabular}
\caption{\label{joint-table} Joint-Reward function of three agents surrounding a prey. The first table indicates joint-rewards when Agent $1$ takes capture action and second table indicates joint-rewards when Agent $1$ takes any other action. Notice that there are numerous joint actions leading to penalty $p$. }
\end{table}

\begin{propositionNO}
For, the predator-prey game defined above, the optimal joint action reward function for any group of $2\leq k\leq n$ predator agents surrounding a prey is \textit{nonmonotonic}  \citep[as defined by][]{mahajan2019maven} iff $p > 0$.
\end{propositionNO}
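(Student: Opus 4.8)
The plan is to reduce the statement to the preference-reversal characterization of (non)monotonicity used by \citet{mahajan2019maven}: the single-step group payoff $R(\bs u)$ over the $k$ surrounding agents is \emph{monotonic} exactly when it factorizes as $R(\bs u)=f\big(Q^1(u^1),\dots,Q^k(u^k)\big)$ for some per-agent utilities $Q^a$ and a mixer $f$ that is nondecreasing in each argument, and it is \emph{nonmonotonic} precisely when some agent's preference between its two actions reverses as the other agents' actions vary. Denoting each agent's action by capture ($C$) or non-capture ($O$), I would first record the group payoff explicitly as
\[
  R(\bs u) = \begin{cases} +1 & \text{if } u^a=C \ \text{for all } a,\\ \phantom{+}0 & \text{if } u^a=O \ \text{for all } a,\\ -p & \text{otherwise,}\end{cases}
\]
and note that since $p$ is a penalty magnitude I take $p\ge 0$, so the claim is equivalent to ``$R$ is monotonic iff $p=0$''.

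For the implication $p>0\Rightarrow$ nonmonotonic, I would exhibit a concrete reversal for agent $1$. Conditioning the other $k-1$ agents on the all-$O$ profile gives $R(C,O,\dots,O)=-p<0=R(O,O,\dots,O)$, so agent $1$ strictly prefers $O$; conditioning them instead on the all-$C$ profile gives $R(C,C,\dots,C)=+1>-p=R(O,C,\dots,C)$, so agent $1$ strictly prefers $C$. Because $k\ge 2$ these two profiles differ, and both inequalities are strict whenever $p>0$. Under any nondecreasing mixer the first gap forces $Q^1(C)<Q^1(O)$ while the second forces $Q^1(C)>Q^1(O)$, a contradiction, so no monotonic factorization can exist.

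For the converse I would show directly that $p=0$ yields a monotonic payoff by constructing a witness. With $p=0$ the payoff becomes $+1$ when all agents capture and $0$ otherwise, so taking $Q^a(C)=1$, $Q^a(O)=0$ for every $a$ together with the nondecreasing mixer $f(x_1,\dots,x_k)=\min_a x_a$ reproduces $R$ exactly; equivalently, $C$ weakly dominates $O$ for each agent, the common greedy choice $C$ recovers the true optimum $R=+1$, and no preference reversal can arise. Combining the two directions gives the stated equivalence.

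I expect the main obstacle to be conceptual rather than computational: pinning down the precise notion of nonmonotonicity so that it matches \citet{mahajan2019maven}, and handling weak versus strict monotonicity of the mixer $f$ carefully, so that a strict payoff gap at one conditioning profile genuinely rules out the opposite ordering at another. It is also worth being explicit about the $p\ge 0$ convention, since permitting $p<-1$ would re-create a reversal through $1+p<0$ and violate the stated ``iff''; beyond these points the argument is just the short case analysis above.
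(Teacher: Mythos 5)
Your overall strategy---exhibit a preference reversal for one agent when $p>0$ and observe that capture weakly dominates when $p=0$---is the same as the paper's, which reduces to a $3$-agent reward table by fixing $k-3$ agents to ``other.'' The problem is that your explicit payoff formula, and hence the key step of the $p>0$ direction, is only correct for $k=3$. In the actual game a prey is caught as soon as \emph{at least three} surrounding agents take the capture action simultaneously, so for $k\ge 4$ the conditioning profile you chose (all $k-1$ other agents capture) already succeeds regardless of agent $1$'s choice: $R(O,C,\dots,C)=+1=R(C,\dots,C)$, not $-p$. Your claimed strict inequality $R(C,\dots,C)>R(O,C,\dots,C)$ therefore fails, agent $1$ is merely indifferent at that profile, and no reversal has been exhibited. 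The repair is exactly the paper's choice of profile: fix $k-3$ agents to ``other'' and exactly \emph{two} agents to ``capture,'' so that agent $1$'s action is pivotal ($+1$ for capture versus $-p$ for other); contrasted with the all-``other'' profile (where capture gives $-p$ versus $0$), this yields the strict reversal for every $p>0$.

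The same misstated payoff infects your $p=0$ witness: for $k\ge 4$ the reward is $+1$ whenever at least three agents capture, so $f=\min$ with $Q^a(C)=1$, $Q^a(O)=0$ does not reproduce $R$. Your parenthetical alternative---capture weakly dominates for every agent when $p=0$, and the common greedy profile attains the optimum---is the argument the paper actually uses and is the one you should keep, since it is insensitive to the exact threshold. Finally, note that for $k=2$ both agents capturing still yields $-p$ (fewer than three attempts), so $R(C,C)=+1$ is wrong there as well; read literally, the $k=2$ group payoff is monotonic even for $p>0$. The paper's own proof silently restricts to $k\ge 3$, and you should flag this boundary case rather than paper over it with an incorrect table.
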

\begin{proof}

Without loss of generality, we assume a single prey surrounded by three agents ($A_1, A_2, A_3$) in the environment. The joint reward function for this group of three agents is defined in Table \ref{joint-table}.

For the case $p>0$ the proposition can be easily verified using the definition of non-monotonicity~\citep{mahajan2019maven}. For any $3\leq k\leq n$ agents attempting to catch a prey in state $s$, we fix the actions of any $k-3$ agents to be ``\textit{other}" indicating either of up, down, left, right, and noop actions and represent it with $\bs{u}^{k-3}$. Next we consider the rewards $r$ for two cases: 
\begin{itemize}
\item  If we fix the action of any \textit{two} of the remaining three agents as ``other" represented as $\bs{u}^{2}$, the action of the remaining agent becomes $u^{1} = \argmax_{u \in \mathcal{U}} r(s, \left\langle u, \bs{u}^2,\bs{u}^{k-3} \right\rangle)=$ ``other".
\item If we fix the $\bs{u}^2$ to be ``capture", we have : $u_1 = \argmax_{u \in\mathcal{U}} r(s, \left\langle u, \bs{u}^2,\bs{u}^{k-3} \right\rangle)=$ ``capture".
\end{itemize}
Thus the best action for agent $A_1$ in state $s$ depends on the actions taken by the other agents and the rewards $R(s)$ are non-monotonic. Finally for the equivalence, we note that for the case $p=0$ we have that a default action of ``capture" is always optimal for any group of $k$ predators surrounding the prey. Thus the rewards are monotonic as the best action for any agent is independent of the rest. 
\end{proof}

\subsection{$m$-Step Matrix Games}
Figure \ref{mstep:game} shows the $m$-step matrix game for $m=10$ from \citet{mahajan2019maven}, where there are $m-2$ intermediate steps, and selecting a joint-action with zero reward leads to termination of the episode.

\begin{figure}[t!]
\centering     
\subfigure{\includegraphics[width=0.8\hsize]{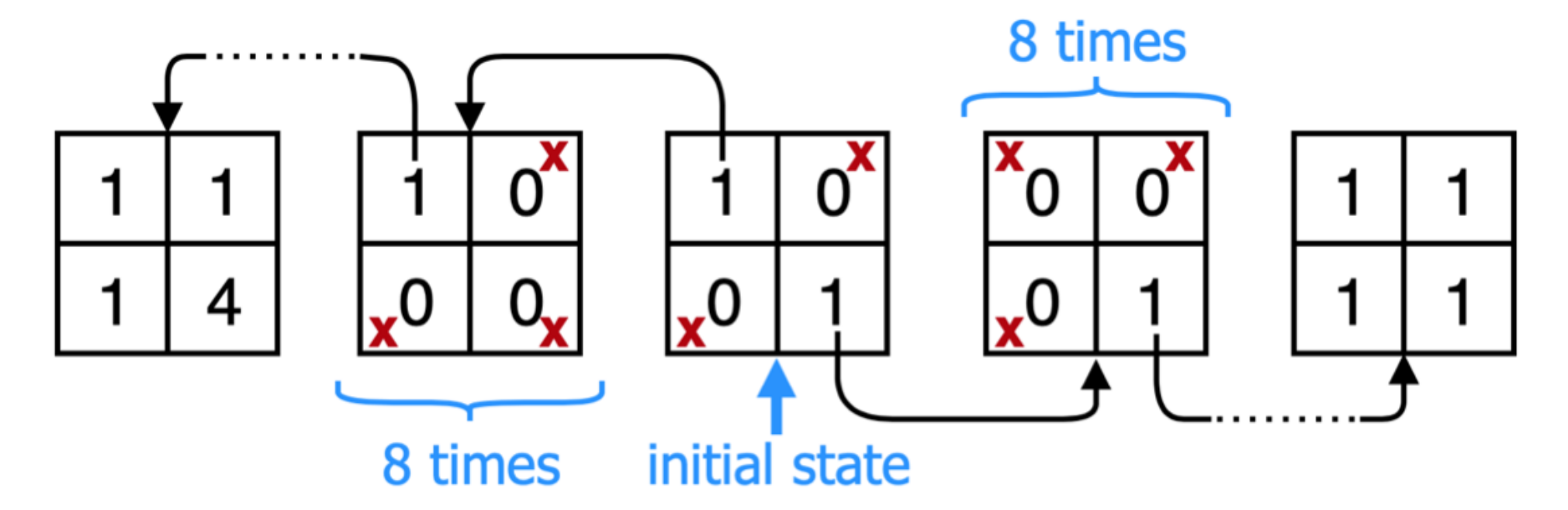}}
\caption{\small $m$-step matrix game from \citet{mahajan2019maven} for $m=10$. The red cross means that selecting that joint action will lead to termination of the episode.}
\label{mstep:game}
\vspace{-5mm}
\end{figure}

\subsection{StarCraft II Micromanagement}
We use the negative reward version of the SMAC benchmark \citep{samvelyan2019starcraft} where each ally agent unit is additionally penalized (penalty $p$) for being killed or suffering damage from the enemy, in addition to receiving positive reward for killing/damage on enemy units, which has recently been shown to improve performance \citep{son2020qtran}. We consider two versions of the penalty $p$, i.e. $p=0.5$ which is default in the SMAC benchmark and $p=1.0$ which equally weights the lives of allies and enemies, making the task more prone to exhibiting RO.

\section{Implementation Details} \label{app-c}
\subsection{Hyper parameters}

\begin{figure}[t!]
\centering     
\subfigure{\includegraphics[width=0.49\hsize]{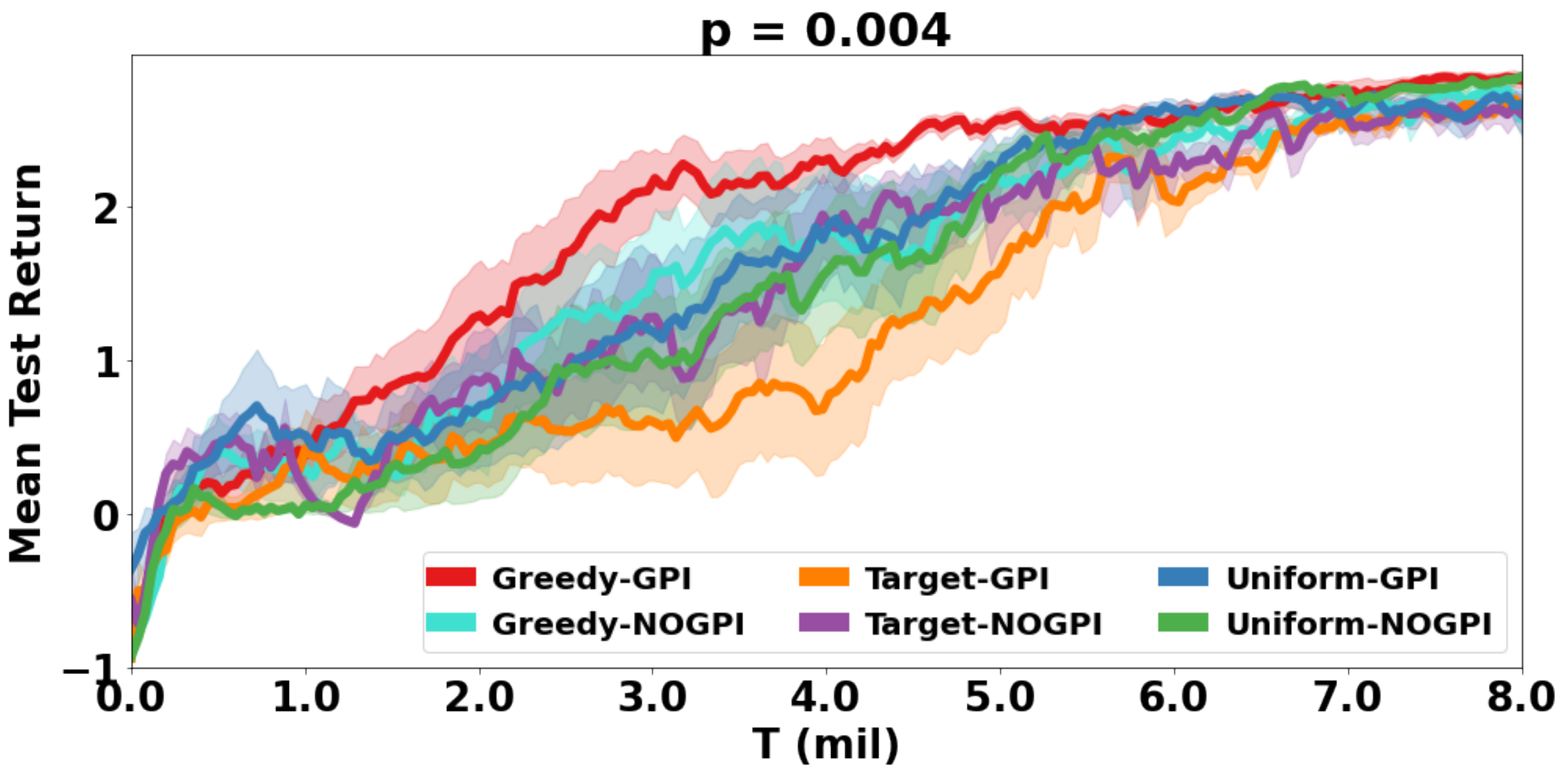}}
\subfigure{\includegraphics[width=0.49\hsize]{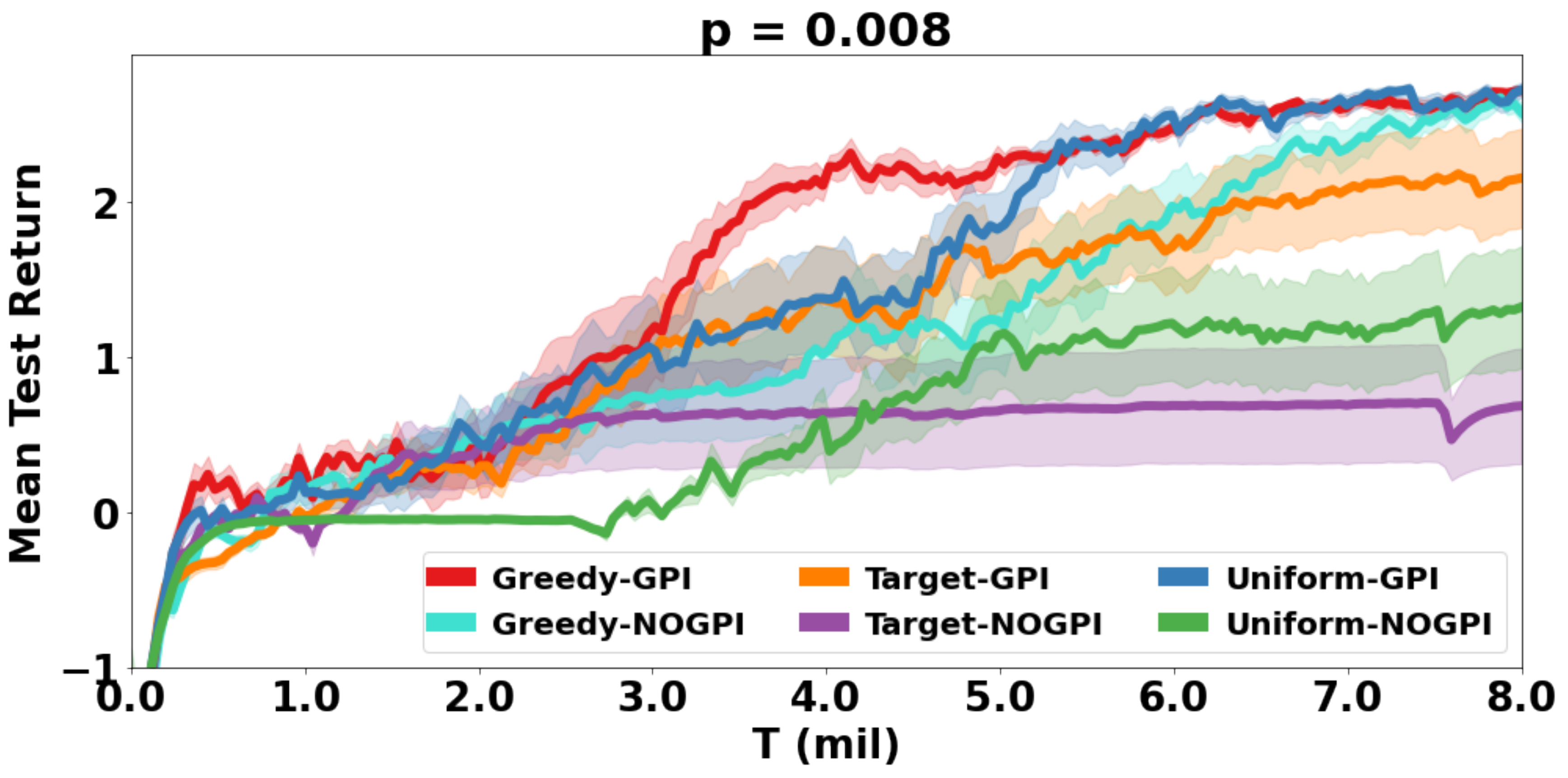}}
\caption{\small Additional Ablation results: Comparison between different action selection of \shorttitle for $p \in \{0.004, 0.008\}$.
}
\label{Fig6:all}
\end{figure}

All algorithms are implemented in the PyMARL framework \citep{samvelyan2019starcraft}. All our experiments use $\epsilon$-greedy scheme where $\epsilon$ is decayed from $\epsilon=1$ to $\epsilon=0.05$ over \{$250k$, $500k$\} time steps. All our tasks use a discount factor of $\gamma=0.99$. We freeze the trained policy every $30k$ timesteps and run $20$ evaluation episodes with $\epsilon=0$. We use learning rate of $0.0005$ with soft target updates for all experiments. We use a target network similar to \citet{mnih2015human} with ``soft" target updates, rather than directly copying the weights: $\theta^- \leftarrow \beta*\theta + (1-\beta)*\theta^-$, where $\theta$ are the current network parameters. We use $\beta=0.005$ for PP and $m$-step experiments and $\beta=0.05$ for SC2 experiments. This means that the target values are constrained to change slowly, greatly improving the stability of learning. All algorithms were trained with RMSprop optimizer by one gradient step on loss computed on a batch of 32 episodes sampled from a replay buffer containing last $1000$ episodes (for SC2, we use last $3000$ episodes). We also used gradient clipping to restrict the norm of the gradient to be $\leq 10$. 

The probability $\alpha$ of action selection based on target task in \shorttitle with uniform and greedy action selection schemes increases from $\alpha=0.3$ to $\alpha=1.0$ over $\{250k, 500k\}$ time steps. For sampling related tasks using normal distribution, we use $\mathcal{N}(\bs{w}, \sigma \textbf{I}_d)$ centered around target task $\bs{w}$ with $\sigma \in \{0.1, 0.2\}$. At the beginning of each episode, we sample \textit{six} related tasks, therefore $|\nu|=6$ (for SC2, we use $|\nu|=3$).

\subsection{NN Architecture}
Each agent's local observation $o_{t}^a$ are concatenated with agent's last action $u_{t-1}^a$, and then passed through a fully-connected (FC) layers of 128 neurons (for SC2, we use 1024 neurons), followed by ReLU activation, a GRU \citep{chung2014empirical}, and another FC of the same dimensionality to generate a action-observation history summary for the agent. Each agent's task vector $\bs{z} \in \nu \cup \{\bs{w}\}$ is passed through a FC layer of 128 neurons (for SC2, we use 1024 neurons) followed by ReLU activation to generate an internal task embedding. The history and task embedding are concatenated together and passed through two hidden FC-256 layers (for SC2, FC-2048 layer) and ReLU activations to generate the outputs for each action. For methods with non-linear mixing such as QMIX \citep{rashid2020monotonic}, WQMIX \citep{rashid2020weighted}, and MAVEN \citep{mahajan2019maven}, we adopt the same hypernetworks from the original paper and test with either a single or double hypernet layers of dim $64$ utilizing an ELU non-linearity. For all baseline methods, we use the code shared publicly by the corresponding authors on Github.

\section{Additional Results} \label{app-d}
\subsection{Predator-Prey}
Figure \ref{Fig6:all} presents additional ablation results for comparison between \shorttitle with different action selection schemes for $p \in \{0.004, 0.008\}$.  Figure \ref{Fig7:all} presents additional zero-shot generalization results for policies trained on target task with penalty $p=0.004$ tested on tasks with penalty $p \in \{0.2, 0.3, 0.5, 1.0\}$. 
For \shorttitle-Greedy-GPI, we can observe that the average number of miscoordinated capture attempts per episode actually drops with $p$ and converges around 1.2, i.e.,  for return $R_p$, average mistakes per episode is $\frac{3-R_p}{p} = \{1.75, 1.7, 1.2, 1.2\}$ for $p \in \{0.2, 0.3, 0.5, 1.0\}$.

\subsection{StarCraft II Micromanagement}

\begin{figure}[t!]
\centering     
\subfigure{\includegraphics[width=0.99\hsize]{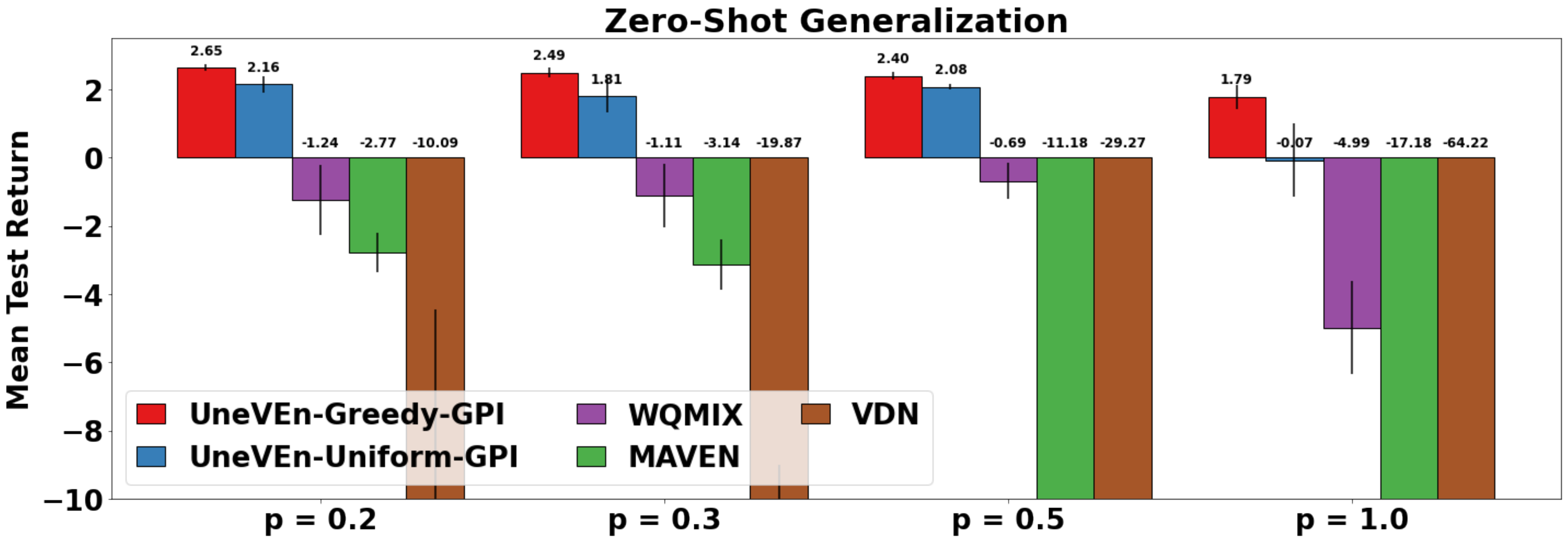}}
\caption{\small Additional Zero-shot generalization results for $p \in \{0.2, 0.3, 0.5, 1.0\}$.
}
\label{Fig7:all}
\end{figure}

\begin{figure}[t!]
\centering     
\subfigure{\includegraphics[width=0.32\hsize]{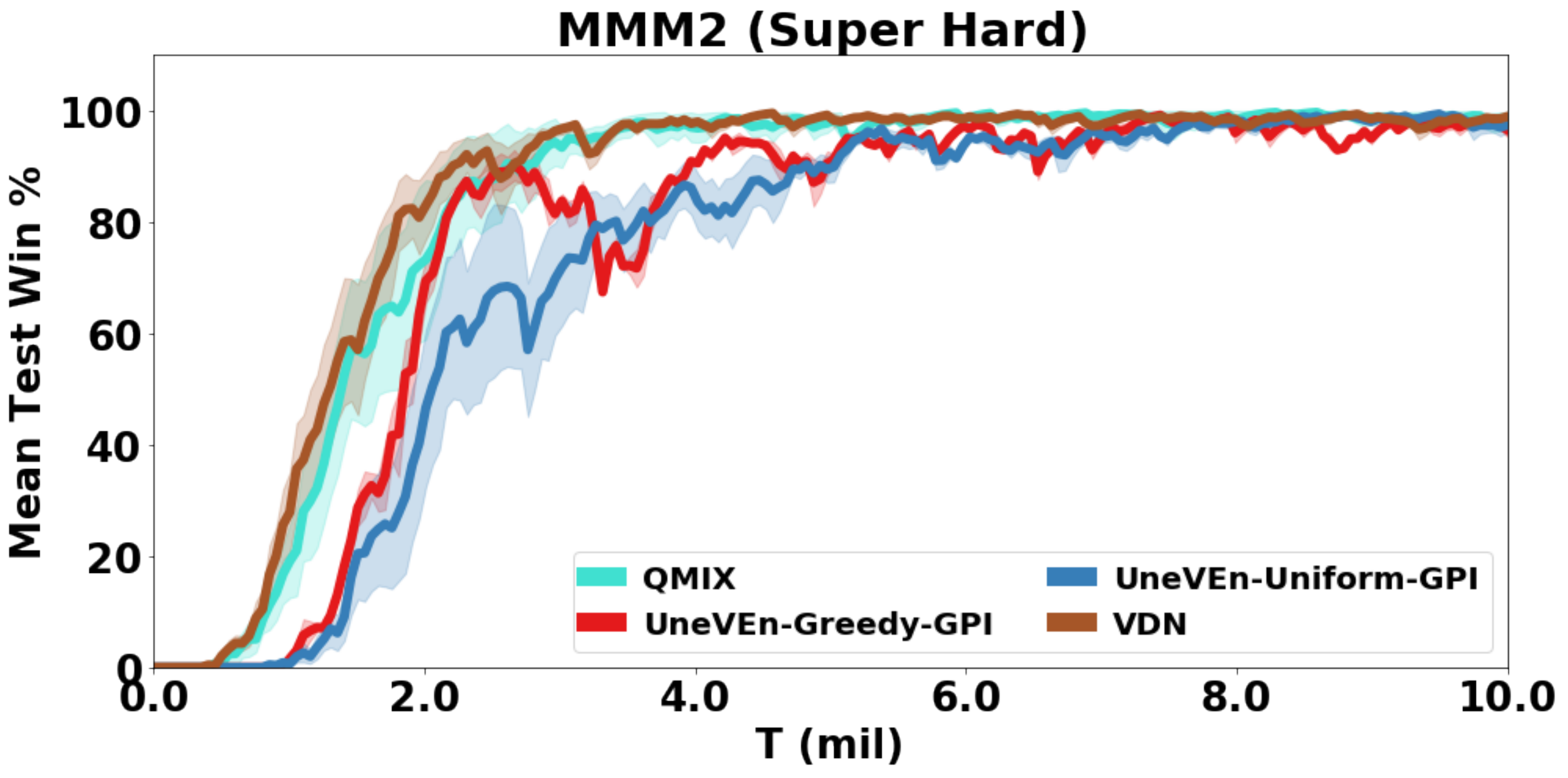}} 
\subfigure{\includegraphics[width=0.32\hsize]{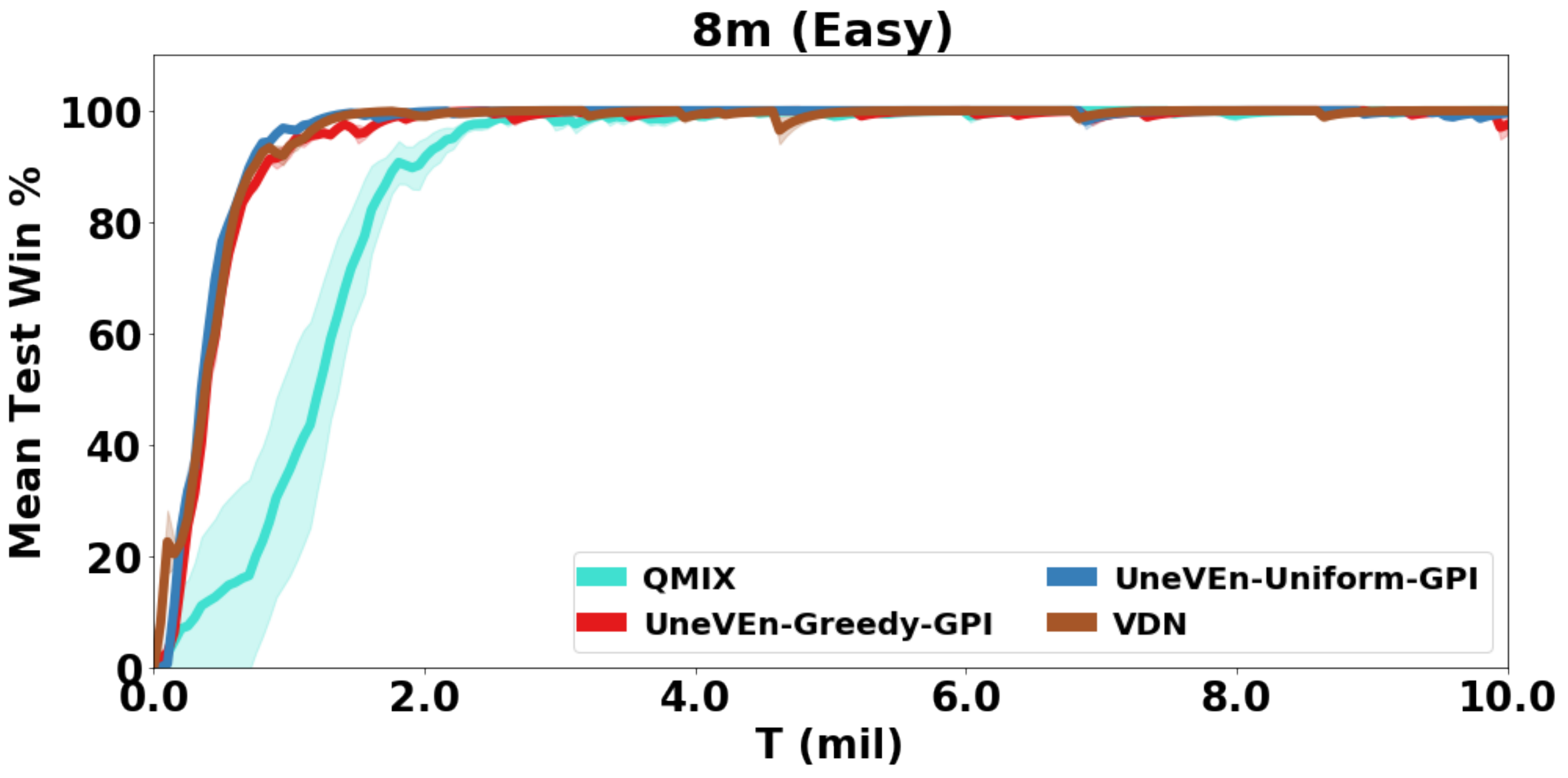}}
\subfigure{\includegraphics[width=0.32\hsize]{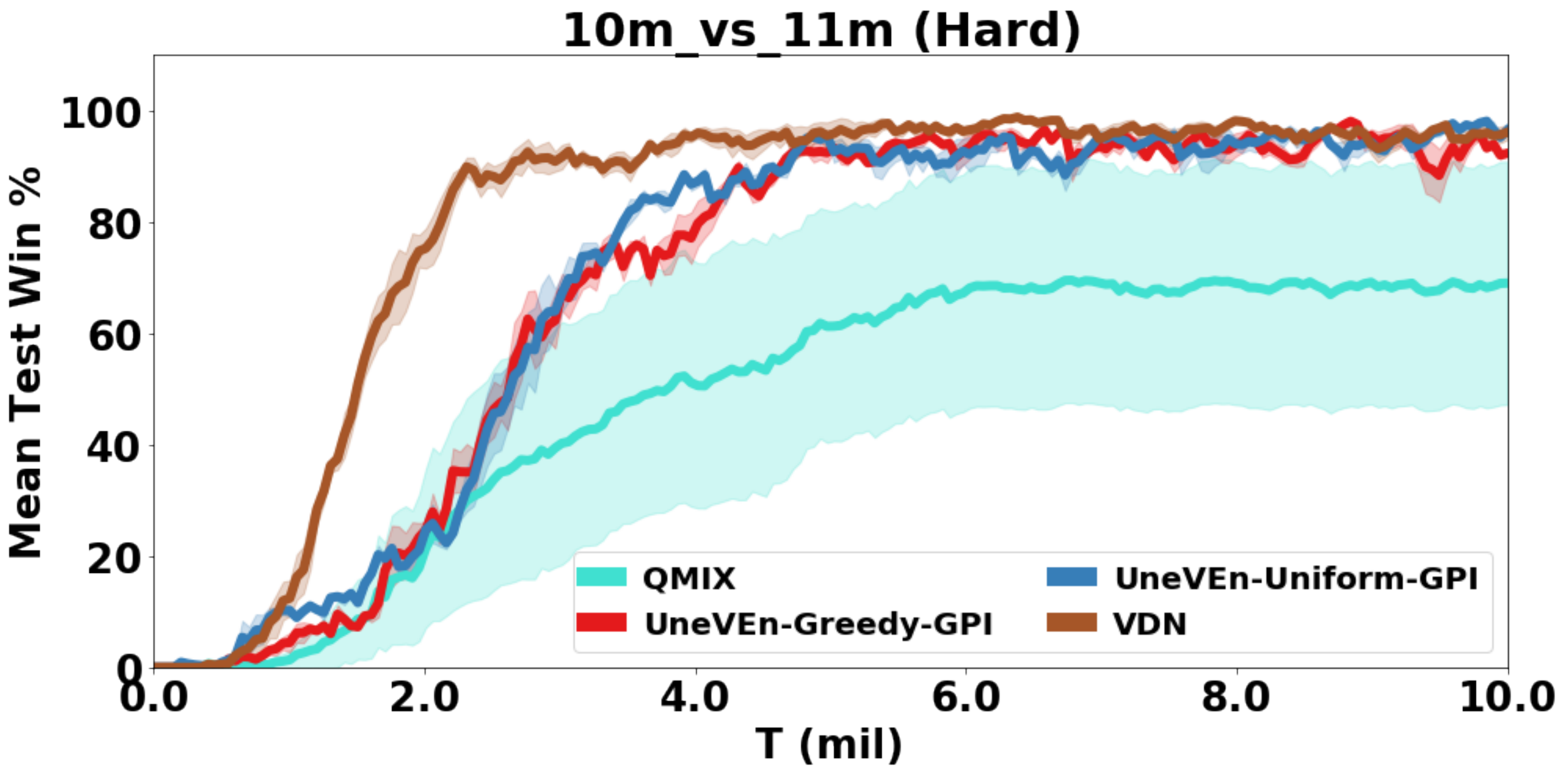}}
\caption{\small Comparison between \shorttitle, VDN and QMIX on SMAC maps with penalty $p=0.5$.}
\label{smac:all2}
\end{figure}

We first discuss different reward functions in the SMAC benchmark \citep{samvelyan2019starcraft}. The default SMAC reward function depends on three major components: (1) \texttt{delta\_enemy}: accumulates difference in health and shield of all enemy units between last time step and current time step, (2) \texttt{delta\_ally}: accumulates difference in health and shield of all ally units between last time step and current time step scaled by \texttt{reward\_negative\_scale}, (3) \texttt{delta\_deaths}: defined below.

For the original reward function \texttt{reward\_only\_positive = True}, \texttt{delta\_deaths} is defined as positive reward of \texttt{reward\_death\_value} for every enemy unit killed. The final reward is \texttt{abs}(\texttt{delta\_enemy} + \texttt{delta\_deaths}). Notice that some of the units have shield regeneration capabilities and therefore, \texttt{delta\_enemy} might contain negative values as current time step health of enemy unit might be higher than last time step. Therefore, to enforce positive rewards, \texttt{abs} function is used.

For \texttt{reward\_only\_positive = False}, \texttt{delta\_deaths} is defined as positive reward of \texttt{reward\_death\_value} for every enemy unit killed, and penalizes \texttt{reward\_negative\_scale * reward\_death\_value} for every ally unit killed. The final reward is simply: \texttt{delta\_enemy} + \texttt{delta\_deaths} - \texttt{delta\_ally}. 

Notice that \texttt{reward\_negative\_scale} measures the relative importance of lives of ally units compared to enemy units. For \texttt{reward\_negative\_scale = 1.0}, both enemy and ally units lives are equally valued, for \texttt{reward\_negative\_scale = 0.5}, ally units are only valued half of enemy units, and for \texttt{reward\_negative\_scale = 0.0}, ally units lives are not valued at all. However, \texttt{reward\_only\_positive = False} with \texttt{reward\_negative\_scale = 0.0} is NOT the same as setting \texttt{reward\_only\_positive = True} as the latter uses an \texttt{abs} function.

To summarize, \texttt{reward\_only\_positive} decides whether there is an additional penalty for health reduction and death of ally units, and \texttt{reward\_negative\_scale} determines the relative importance of lives of ally units when \texttt{reward\_only\_positive = False}. Figure \ref{Fig13:all} shows that for most of the maps, there is not a big performance difference for VDN and QMIX between different reward functions (original, $p=0.0$ and $p=0.5$). However, for some maps using \texttt{reward\_only\_positive = False} with either $p=0.0$ and $p=0.5$ improves performance over the original reward function. We hypothesize that the use of \texttt{abs} in the original reward function can detriment the learning of agent as it might get positive absolute reward to increase the health of enemy units.

Figure \ref{smac:all2} presents the mean test win rate for SMAC maps with \texttt{reward\_only\_positive = False} and  low penalty of $p=0.5$. Both VDN and QMIX achieve almost 100\% win rate on these maps and the additional complexity of learning MAUSFs in our approach results in slightly slower convergence. However, \shorttitle with both GPI schemes matches the performance as VDN and QMIX in all maps.

\begin{figure*}[htbp]
\centering     
\subfigure{\includegraphics[width=0.33\hsize]{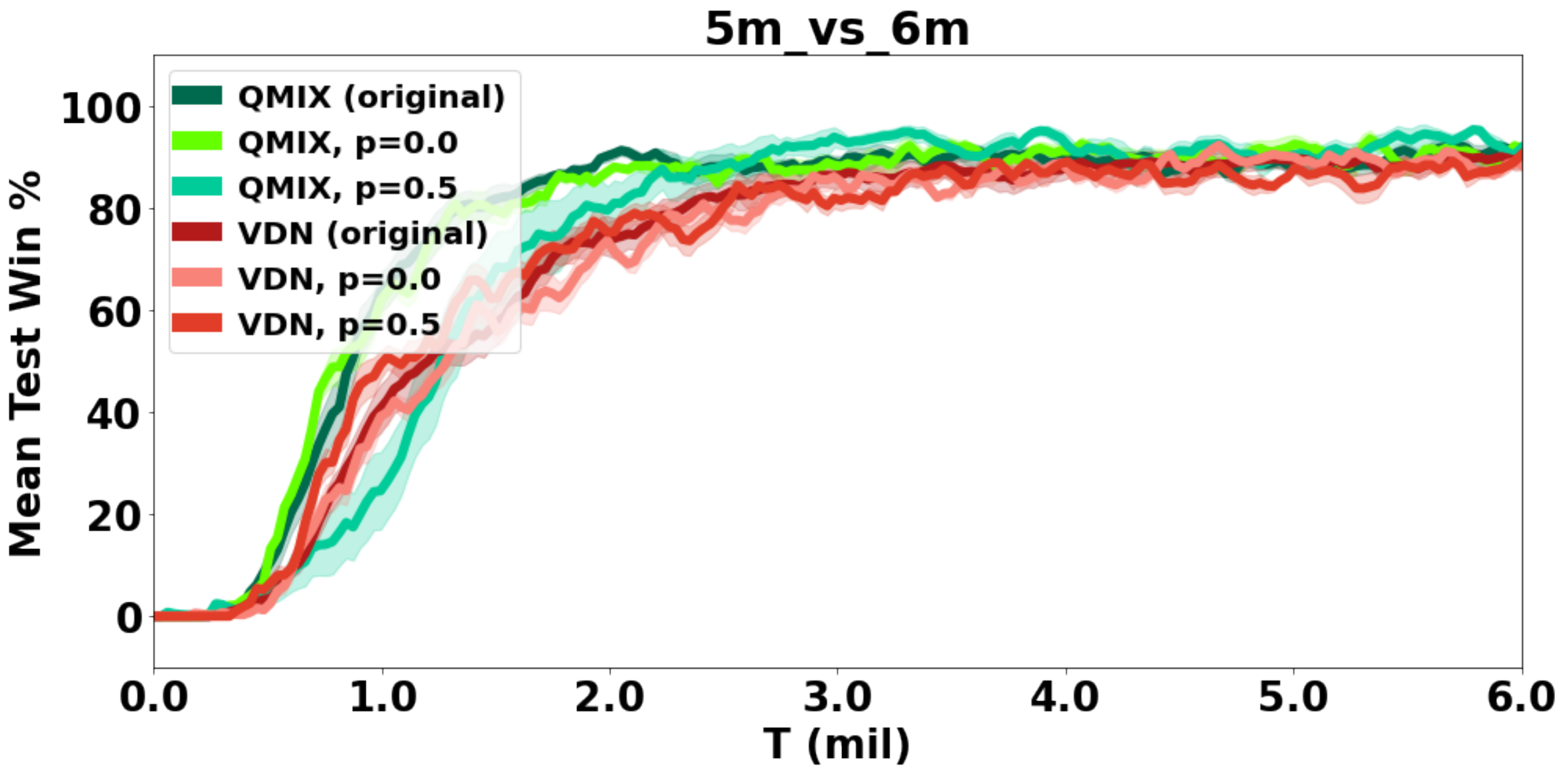}} 
\subfigure{\includegraphics[width=0.33\hsize]{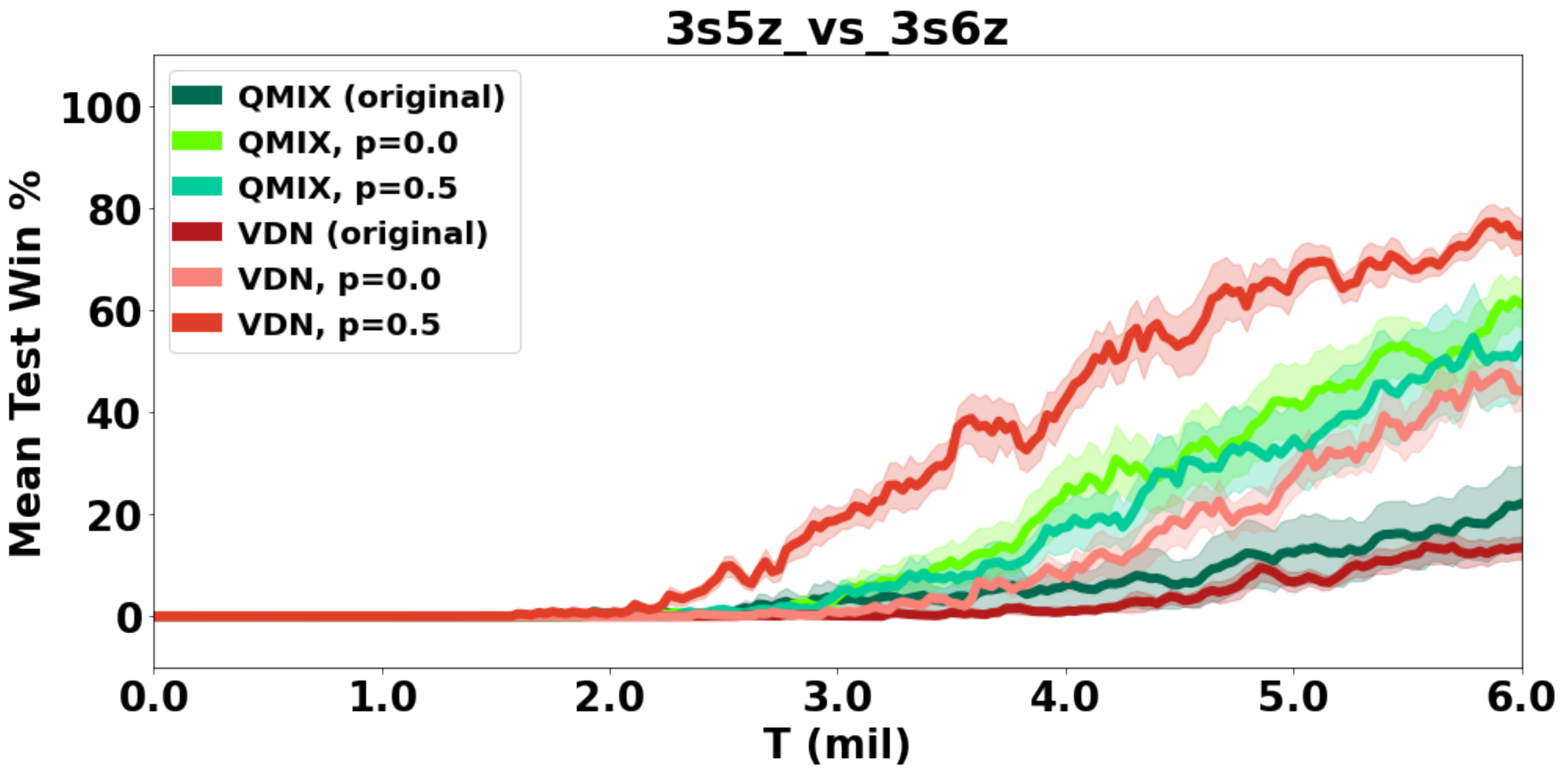}}
\subfigure{\includegraphics[width=0.33\hsize]{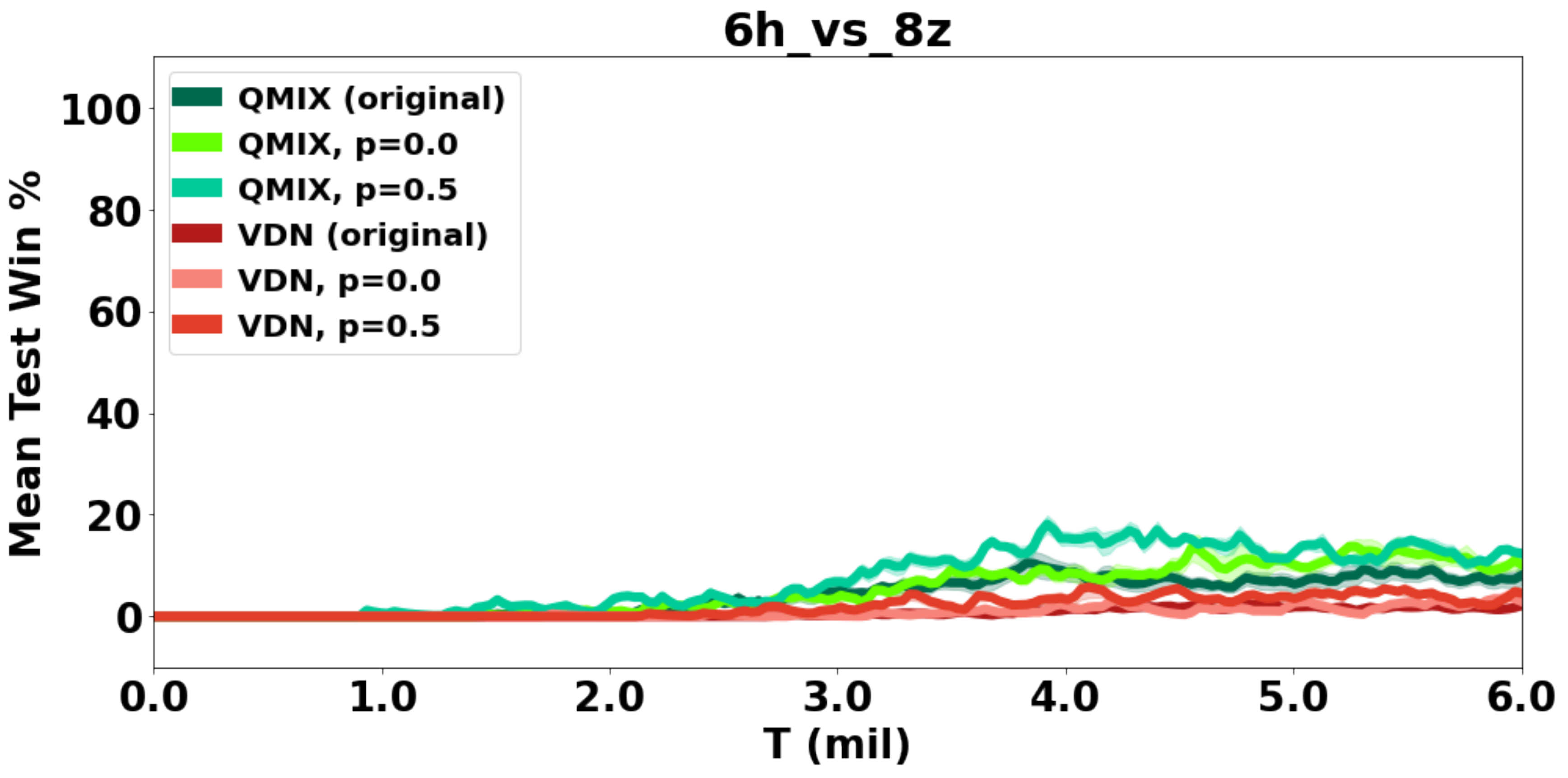}}
\par
\subfigure{\includegraphics[width=0.33\hsize]{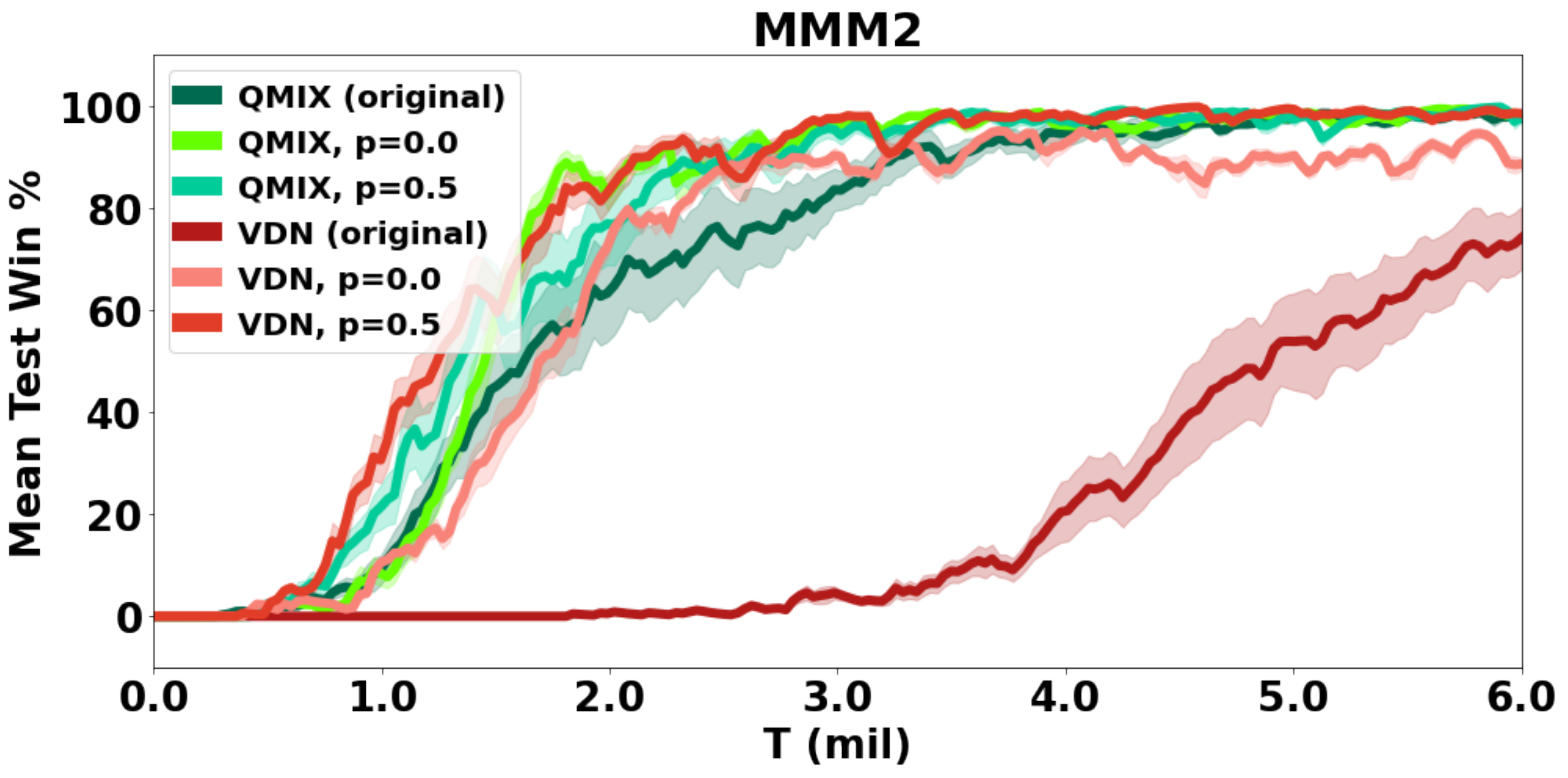}} 
\subfigure{\includegraphics[width=0.33\hsize]{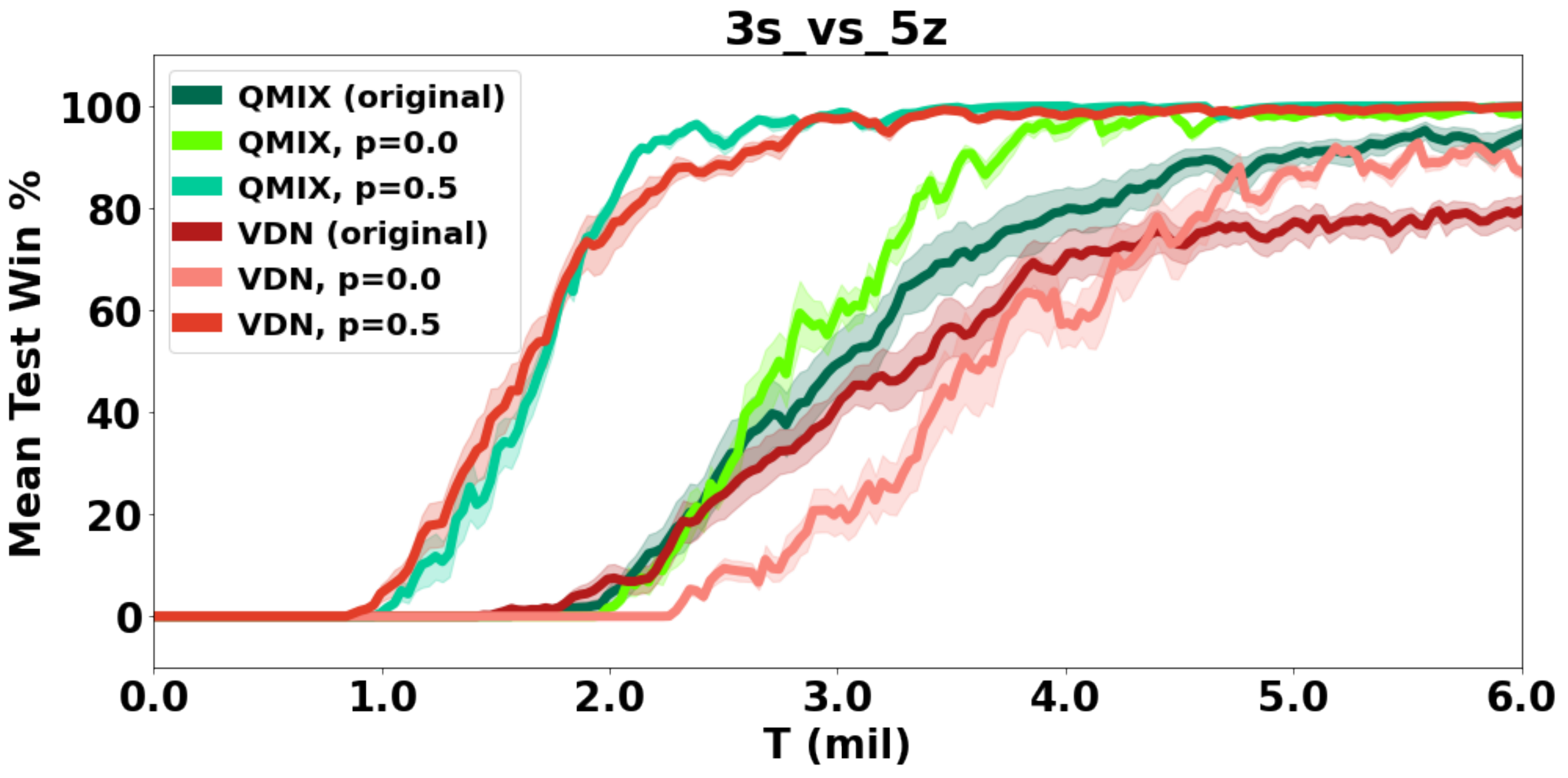}}
\subfigure{\includegraphics[width=0.33\hsize]{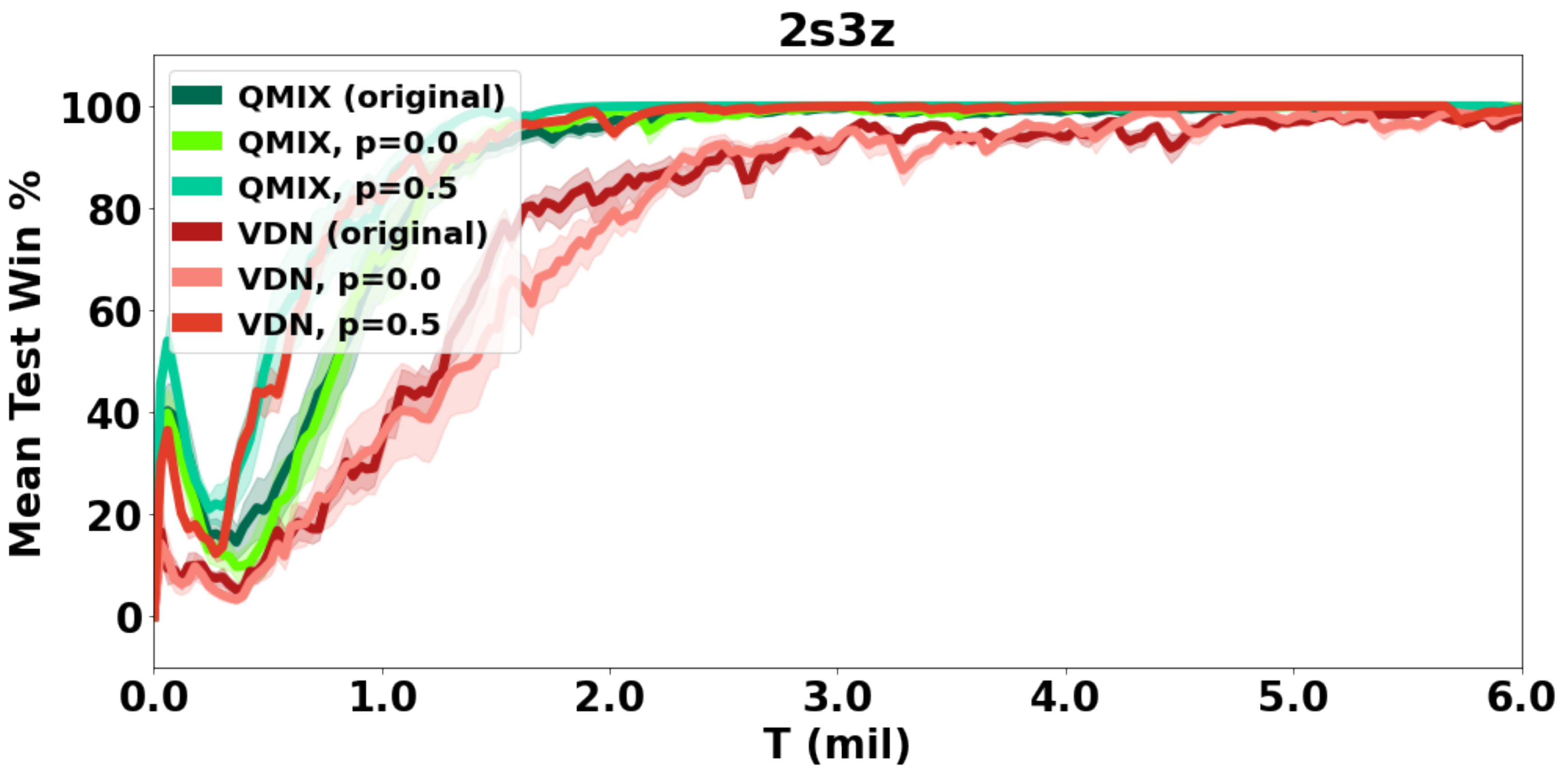}}
\par
\subfigure{\includegraphics[width=0.33\hsize]{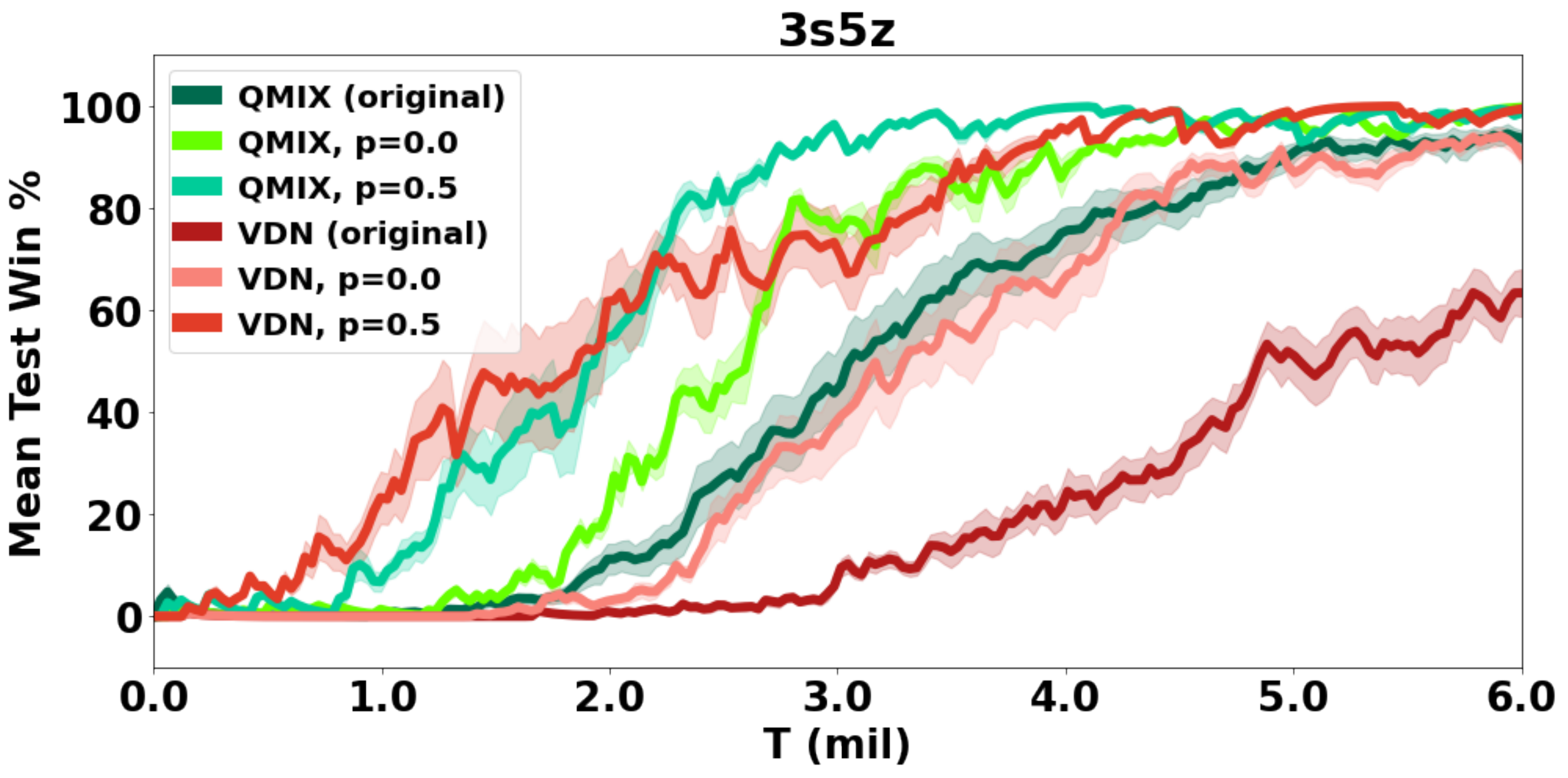}} 
\subfigure{\includegraphics[width=0.33\hsize]{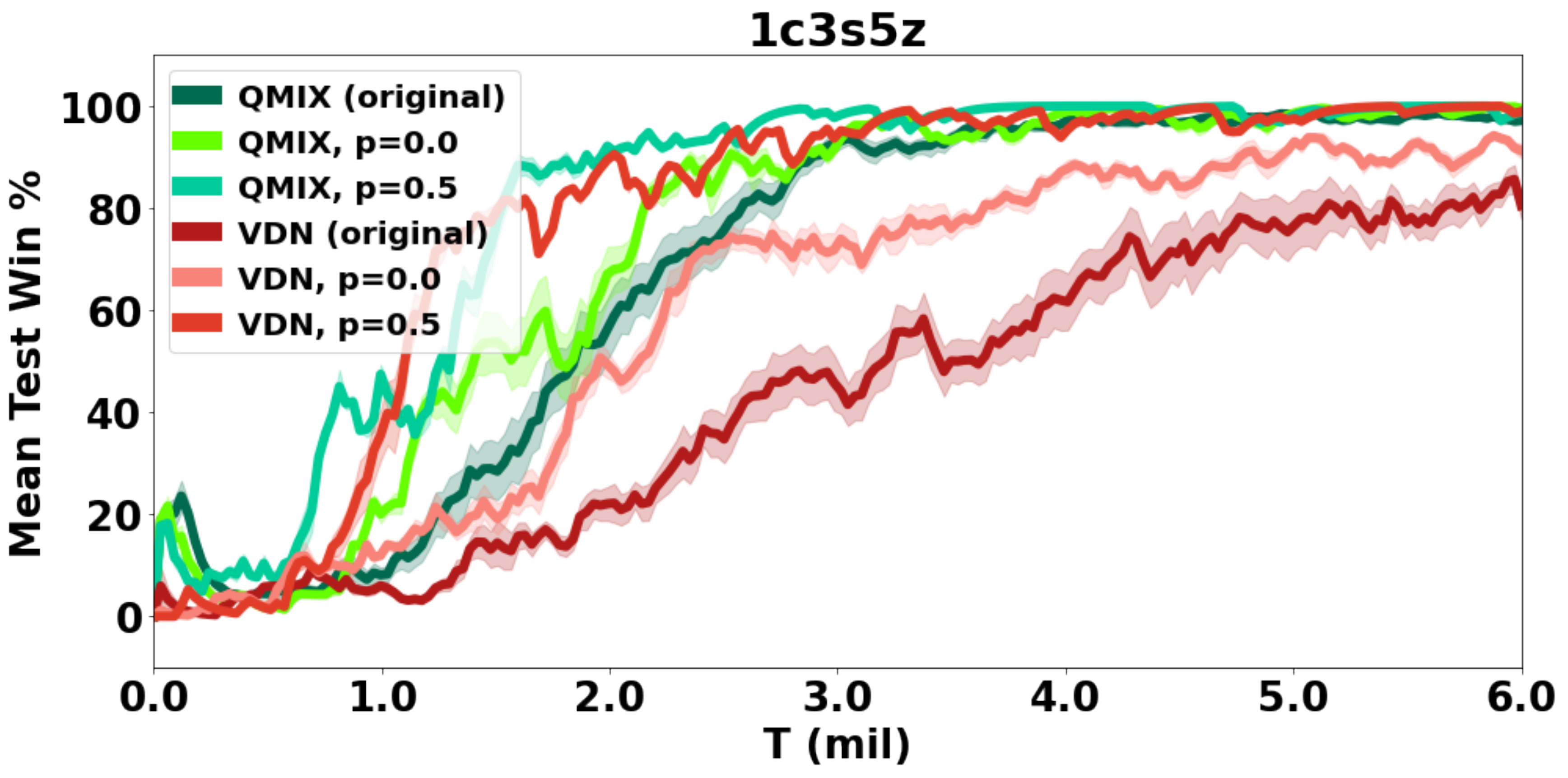}}
\subfigure{\includegraphics[width=0.33\hsize]{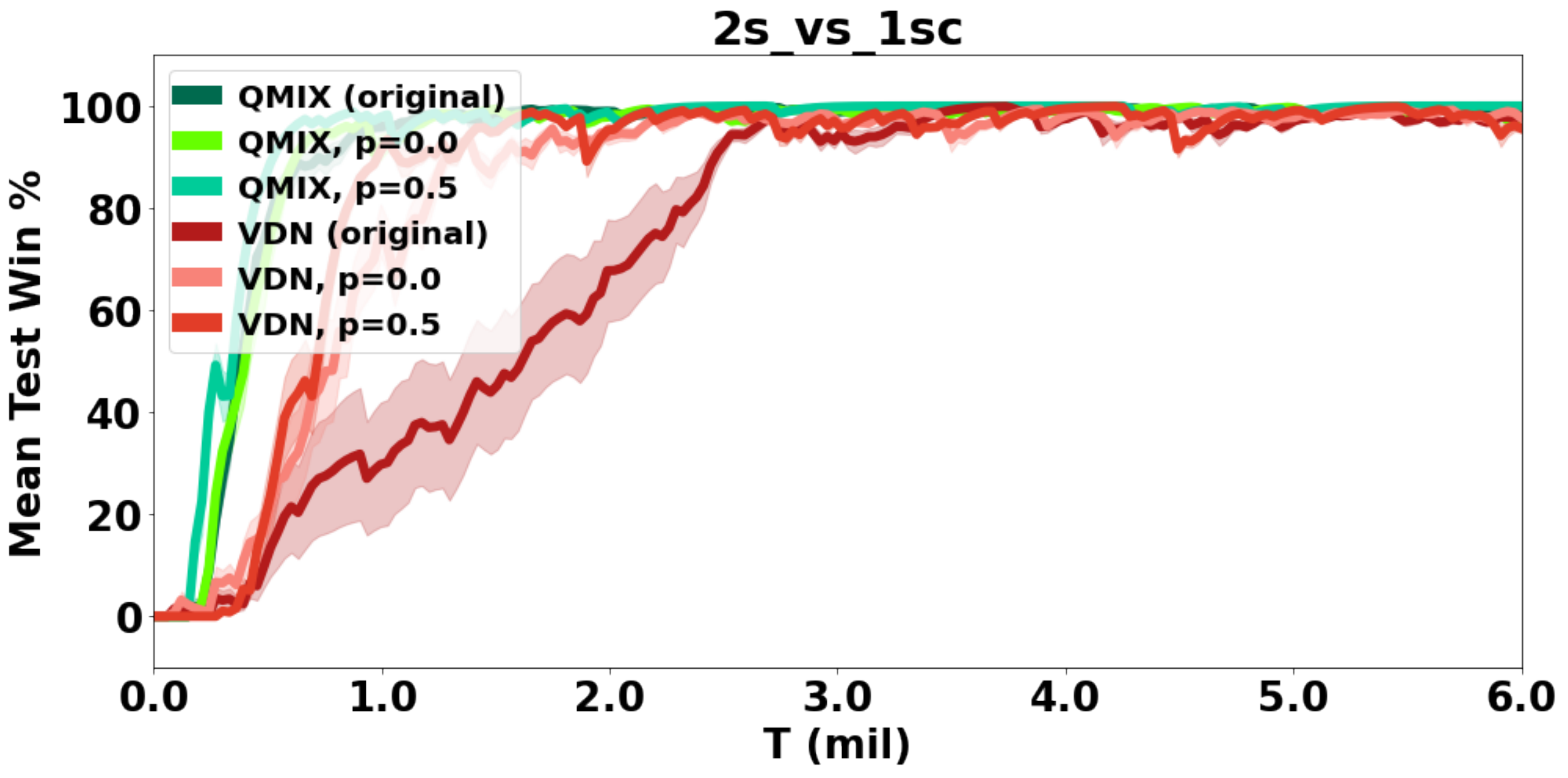}}
\par
\subfigure{\includegraphics[width=0.33\hsize]{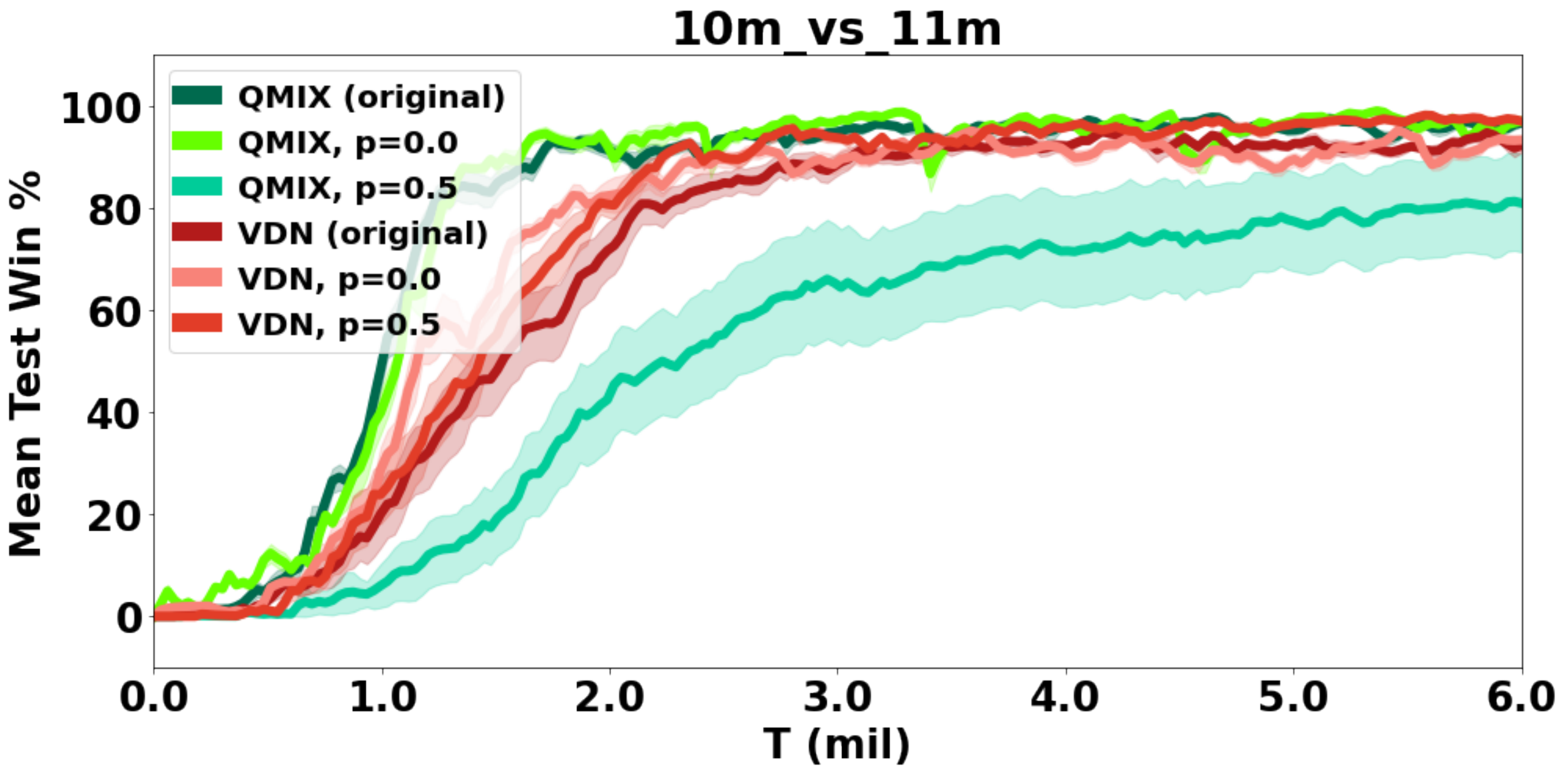}} 
\subfigure{\includegraphics[width=0.33\hsize]{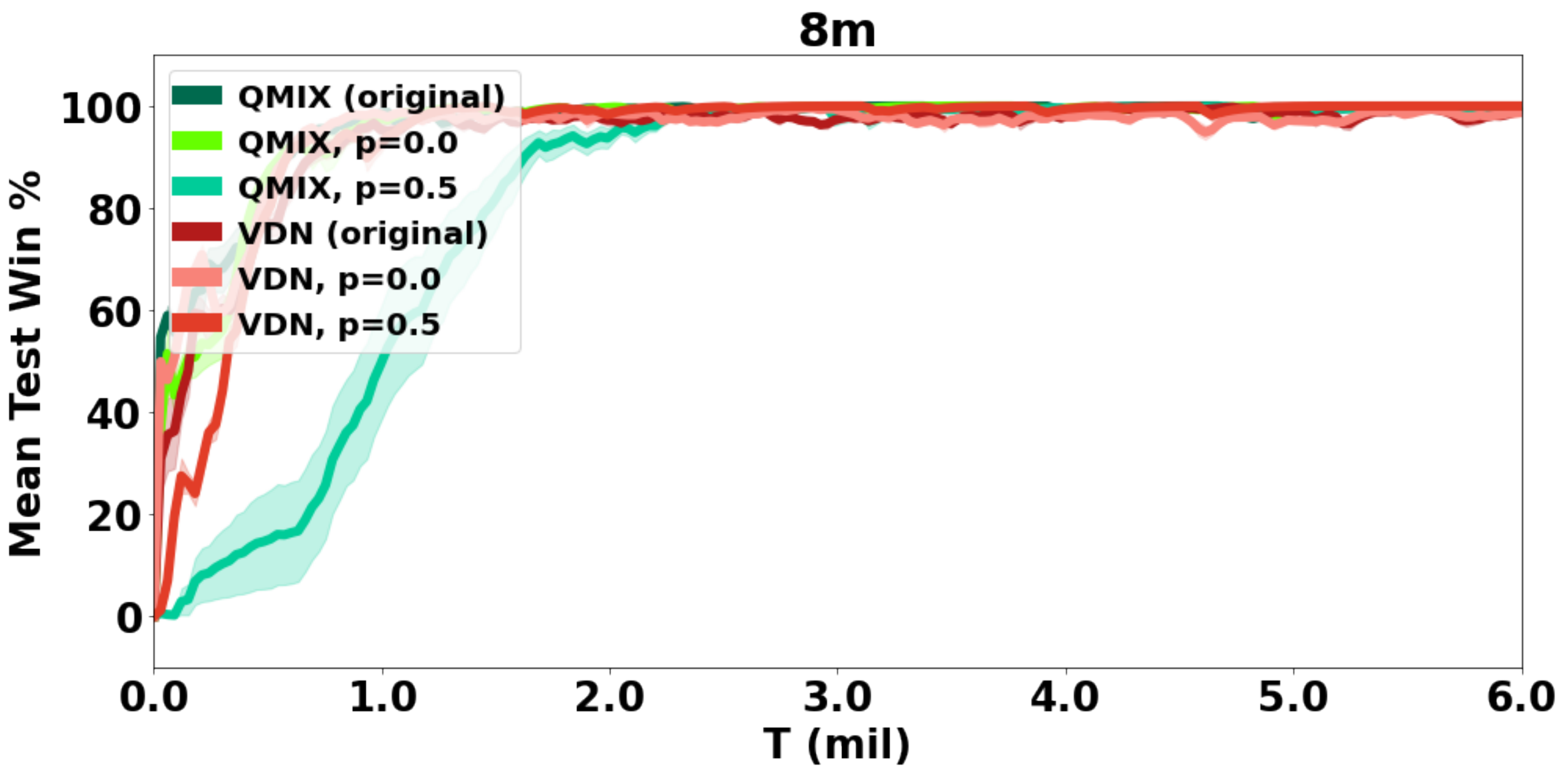}}
\subfigure{\includegraphics[width=0.33\hsize]{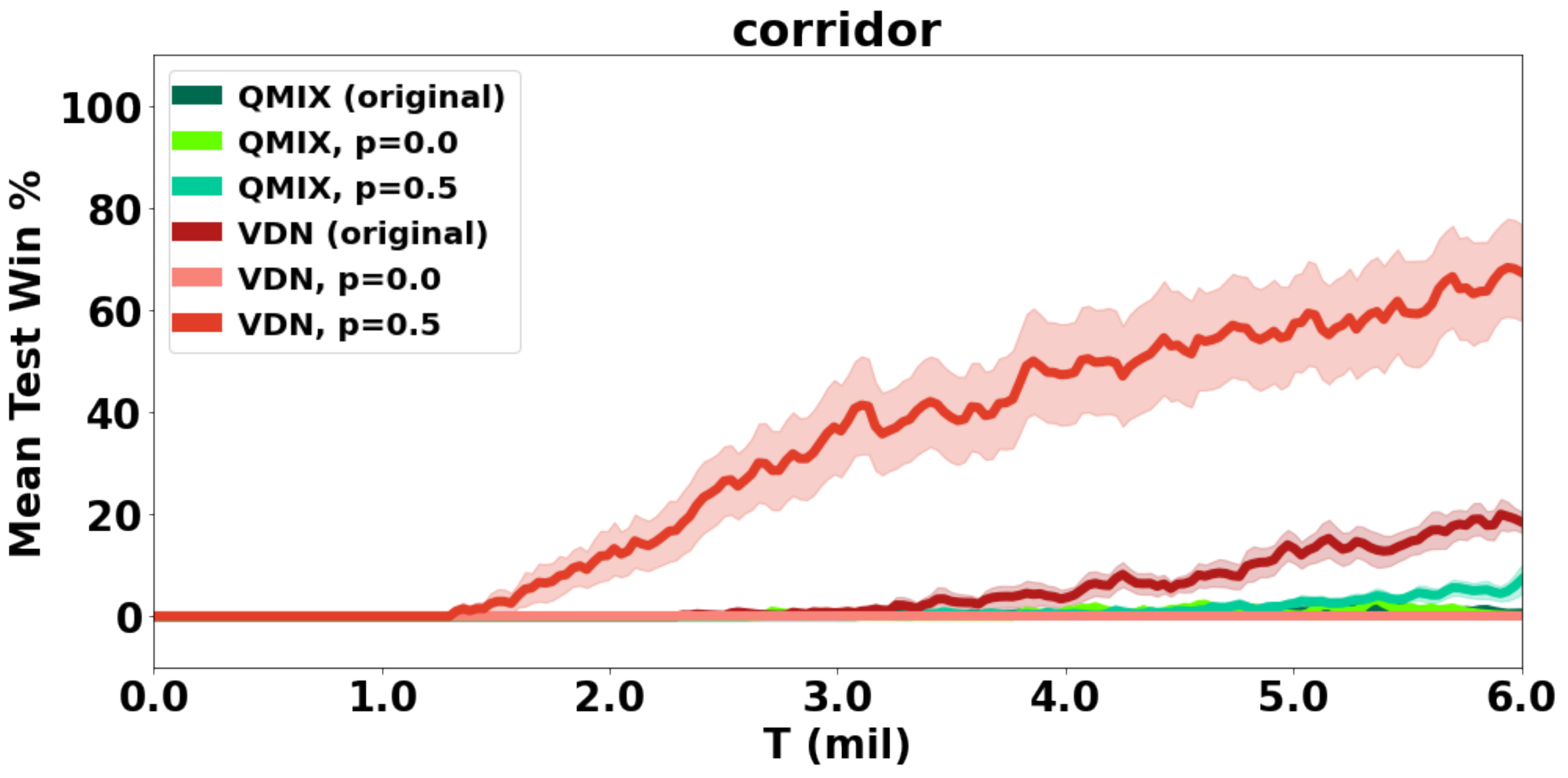}}
\par
\subfigure{\includegraphics[width=0.33\hsize]{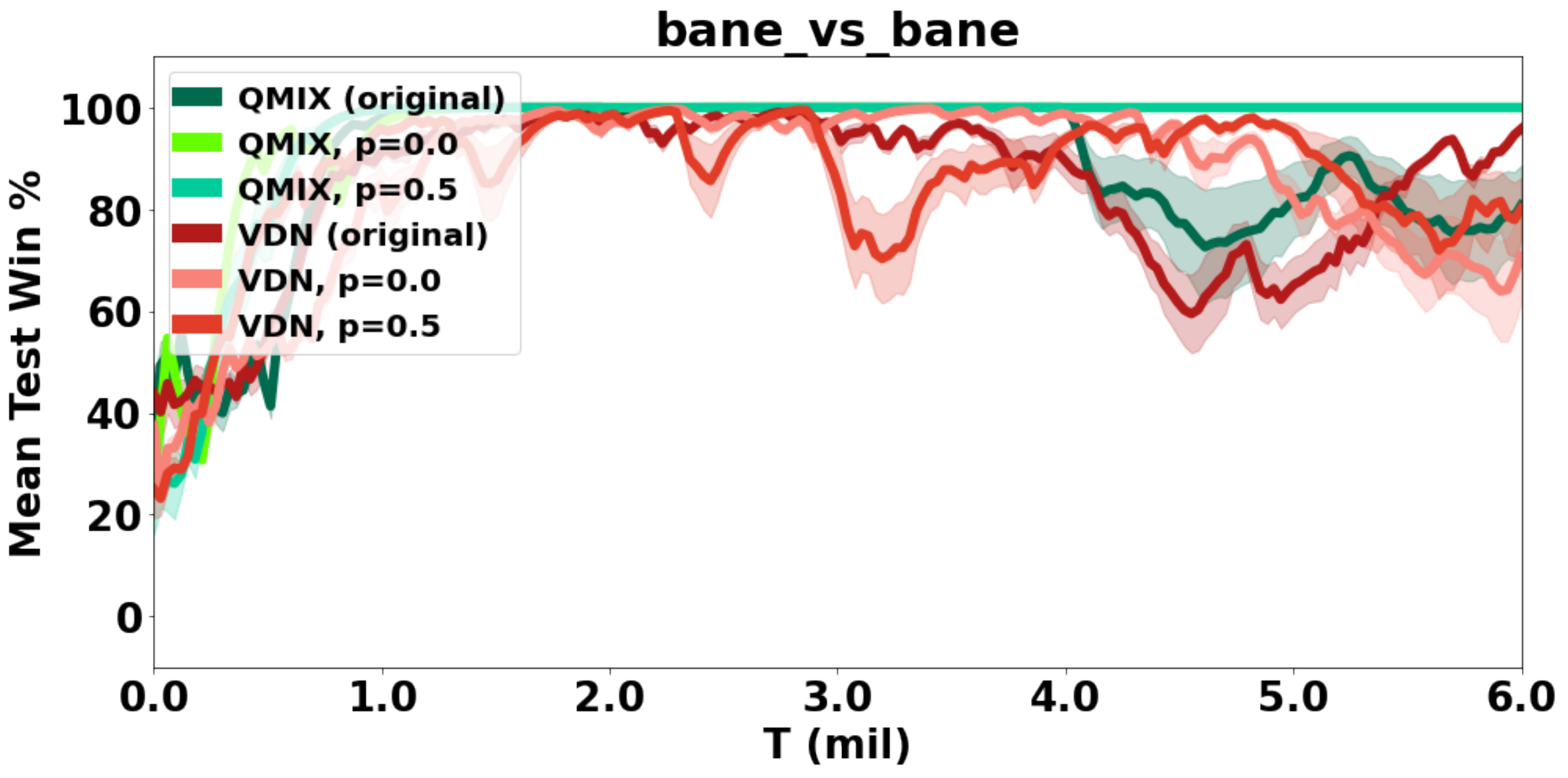}} 
\subfigure{\includegraphics[width=0.33\hsize]{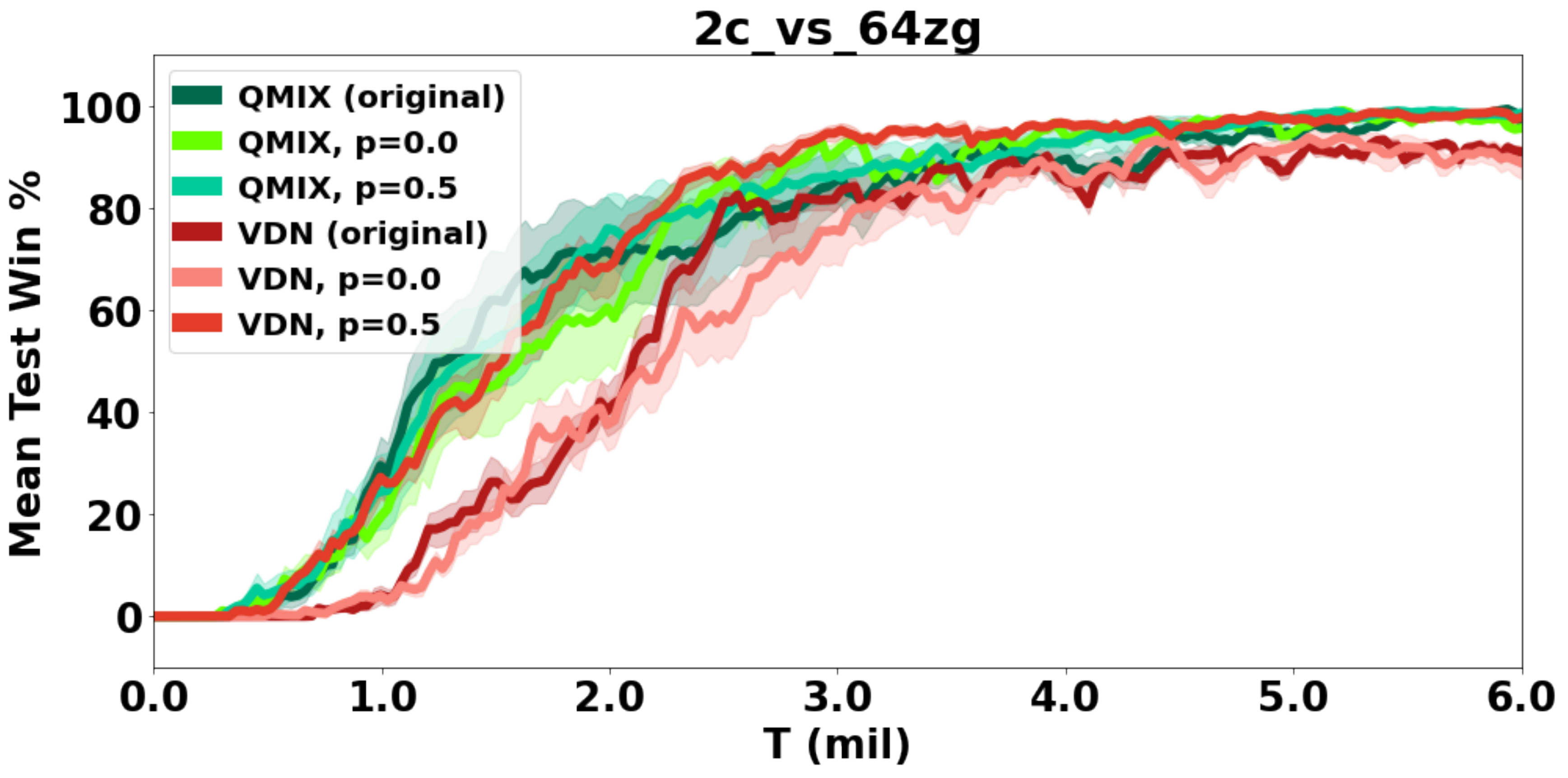}}
\subfigure{\includegraphics[width=0.33\hsize]{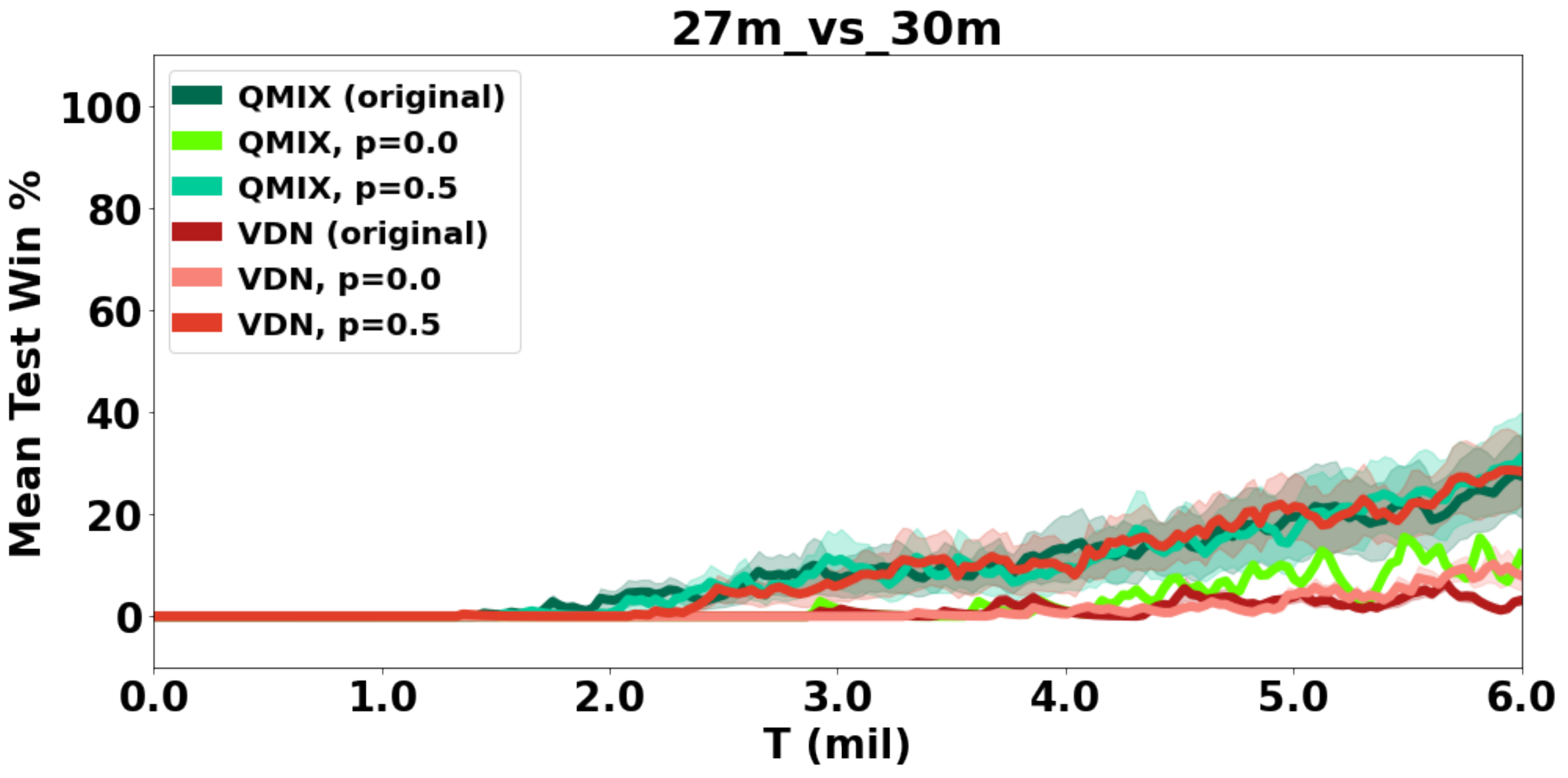}}
\caption{Comparison between VDN and QMIX baselines with original positive reward function and modified reward function with $p \in \{0.0, 0.5\}$ on SMAC maps.}
\label{Fig13:all}
\end{figure*}

%
%
%

\end{document}